\documentclass[twoside]{article}

\usepackage{etoolbox}
\newtoggle{anony}
\toggletrue{anony}

\usepackage{color}
\usepackage{mathtools}
\usepackage{url}
\usepackage{amsthm}
\usepackage{amssymb}
\usepackage{booktabs}
\usepackage{algorithm}
\usepackage{algorithmic}
\usepackage{natbib}
\usepackage{thmtools, thm-restate}

\definecolor{antiquebrass}{rgb}{0.44, 0.47, 0.77}

\usepackage[accepted]{aistats2017}

\newcommand\ACOMM[1]{\STATE {\color{antiquebrass}{// {#1}}}}
\DeclareMathOperator*{\argmin}{arg\,min}

\DeclareMathOperator*{\volume}{volume}
\DeclareMathOperator*{\dist}{dist}
\DeclareMathOperator*{\diss}{diss}
\newcommand\SM{}
\newcommand\trimed{\texttt{trimed}}
\newtheorem{theorem}{Theorem}[section]
\newtheorem{lemma}[theorem]{Lemma}

\setlength{\tabcolsep}{3.5pt}

\newcounter{nbdrafts}
\setcounter{nbdrafts}{0}
\makeatletter
\newcommand{\checknbdrafts}{
\@latex@warning@no@line{****************************************************}
\ifnum \thenbdrafts > 0
\@latex@warning@no@line{* The document contains \thenbdrafts \space todo note(s)}
\else
\@latex@warning@no@line{* NO DRAFT ANYMORE}
\fi
\@latex@warning@no@line{****************************************************}}




\newcommand{\whp}{w.h.p.\@}

\begin{document}

\twocolumn[

\aistatstitle{A Sub-Quadratic Exact Medoid Algorithm}

\iftoggle{anony}{
\aistatsauthor{ James Newling \And Fran\c cois Fleuret}


\aistatsaddress{ Idiap Research Institute 
\& EPFL\And Idiap Research Institute \& EPFL } 
\toggletrue{anony}
}{
\aistatsauthor{ Anonymous Author 1 \And Anonymous Author 2 \And Anonymous Author 3 }
\aistatsaddress{ Unknown Institution 1 \And Unknown Institution 2 \And Unknown Institution 3 } 
}

]


\begin{abstract}
We present a new algorithm \trimed{} for obtaining the \emph{medoid} of a set, that is the element of the set which minimises the mean distance to all other elements. The algorithm is shown to have, under certain assumptions, expected run time $O( N^{\frac{3}{2}})$ in $\mathbb{R}^d$ where $N$ is the set size, making it the first sub-quadratic exact medoid algorithm for $d>1$. Experiments show that it performs very well on spatial network data, frequently requiring two orders of magnitude fewer distance calculations than state-of-the-art approximate algorithms. As an application, we show how \trimed{} can be used as a component in an accelerated $K$-medoids algorithm, and then how it can be relaxed to obtain further computational gains with only a minor loss in cluster quality. 
\end{abstract}

\section{Introduction}
A popular measure of the centrality of an element of a set is its mean distance to all other elements. In network analysis, this measure is referred to as \emph{closeness centrality}, we will refer to it as \emph{energy}. Given a set $\mathcal{S} = \{x(1), \ldots, x(N)\}$ the energy of element $i \in \{1, \ldots, N\}$ is thus given by, 
\begin{equation*}
E(i) = \frac{1}{N}\sum_{j\in \{1, \ldots, N\}} \dist(x(i), x(j)).
\end{equation*}

An element in $\mathcal{S}$ with minimum energy is referred to as a \emph{1-median} or a \emph{medoid}. Without loss of generality, we will assume that $\mathcal{S}$ contains a unique medoid. The problem of determining the medoid of a set arises in the contexts of clustering, operations research, and network analysis. In clustering, the Voronoi iteration $K$-medoids algorithm~\citep{the_elements, park_2009_kmedoids} requires determining the medoid of each of $K$ clusters at each iteration. In operations research, the facility location problem requires placing one or several facilities so as to minimise the cost of connecting to clients. In network analysis, the medoid may represent an influential person in a social network, or the most central station in a rail network.

\subsection{Medoid algorithms and our contribution}
A simple algorithm for obtaining the medoid of a set of $N$ elements computes the energy of all elements and selects the one with minimum energy, requiring $\Theta(N^2)$ time. In certain settings $\Theta(N)$ algorithms exist, such as in 1-D where the problem is solved by Quickselect~\citep{hoares_algorithm}, and more generally on trees. However, no general purpose $o(N^2)$ algorithm exists. An example illustrating the impossibility of such an algorithm is presented in Supplementary Material B (\SM\ref{app:difficulty}). Related to finding the medoid of a set is finding the \emph{geometric median}, which in vector spaces is defined as the point in the vector space with minimum energy. The relationship between the two problems is discussed in \S\ref{sec:gmedoid}.

Much work has been done to develop approximate algorithms in the context of network analysis. The \texttt{RAND} algorithm of \cite{eppstein_2004_centrality} can be used to estimate the energy of all nodes in a graph. The accuracy of \texttt{RAND} depends on the diameter of the network, which motivated \cite{cohen_at_scale} to use pivoting to make \texttt{RAND} more effective for large diameter networks. The work most closely related to ours is that of \cite{okamoto_2008_centrality}, where \texttt{RAND} is adapted to the task of finding the $k$ lowest energy nodes, $k=1$ corresponding to the medoid problem. The resulting \texttt{TOPRANK} algorithm of \cite{okamoto_2008_centrality} has run time $\tilde{O}(N^{5/3})$ under certain assumptions, and returns the medoid with probability $1 - O(1/N)$, that is \emph{with high probability} (\whp{}). Note that only their run time result requires any assumption, obtaining the medoid \whp{} is guaranteed. \texttt{TOPRANK} is discussed in \S\ref{sec:toprank}.

In this paper we present an algorithm which has expected run time $O(N^{3/2})$ under certain assumptions and always returns the medoid. In other words, we present an exact medoid algorithm with improved complexity over the state-of-the-art approximate algorithm, \texttt{TOPRANK}. We show through experiments that the new algorithm works well for low-dimensional data in $\mathbb{R}^d$ and for spatial network data. Our new medoid algorithm, which we call \trimed{}, uses the triangle inequality to quickly eliminate elements which cannot be the medoid. The $O(N^{3/2})$ run time follows from the surprising result that all but $O(N^{1/2})$ elements can be eliminated in this way. 

The complexity bound on expected run time which we derive contains a term which grows exponentially in dimension $d$, and experiments show that in very high dimensions \trimed{} often ends up computing $O(N^2)$ distances.


\subsection{$K$-medoids algorithms and our contribution}
The $K$-medoids problem is to partition a set into $K$ clusters, so as to minimise the sum over elements of dissimilarites with their nearest medoids. That is, to choose $\mathcal{M} = \{m(1), \ldots, m(K)\} \subset \{1, \ldots, N\}$ to minimise,
\begin{equation*}
\mathcal{L}(\mathcal{M}) = \sum_{i = 1}^{N} \min_{k \in \{1, \ldots, K\}} \diss(x(i), x(m(k))) .
\end{equation*}
We focus on the special case where the dissimilarity is a distance ($\diss = \dist$), which is still more general than $K$-means which only applies to vector spaces. $K$-medoids is used in bioinformatics where elements are genetic sequences or gene expression levels~\citep{clustering_microarray_data} and has been applied to clustering on graphs~\citep{Rattigan_1011}. In machine vision, $K$-medoids is often preferred, as a medoid is more easily interpretable than a mean~\citep{cloudless_rome}. 


The $K$-medoids problem is NP-hard, but there exist approximation algorithms. The Voronoi iteration algorithm, appearing in~\cite{the_elements} and later in~\cite{park_2009_kmedoids}, consists of alternating between updating medoids and assignments, much in the same way as Lloyd's algorithm works for the $K$-means problem. We will refer to it as \texttt{KMEDS}, and to  Lloyd's $K$-means algorithm as \texttt{lloyd}. 

One significant difference between \texttt{KMEDS} and \texttt{lloyd} is that the computation of a medoid is quadratic in the number of elements per cluster whereas the computation of a mean is linear. By incorporating our new medoid algorithm into \texttt{KMEDS}, we break the quadratic dependency of \texttt{KMEDS}, bringing it closer in performance to \texttt{lloyd}. We also show how ideas for accelerating \texttt{lloyd} presented in~\cite{elkan_2003_kmeansicml} can be used in \texttt{KMEDS}. 

It should be noted that algorithms other than \texttt{KMEDS} have been proposed for finding approximate solutions to the $K$-medoids problem, and have been shown to be very effective in~\cite{1609.04723}. These include \texttt{PAM} and \texttt{CLARA} of~\cite{generic_kmedoids}, and \texttt{CLARANS} of \cite{clarans}. In this paper we do not compare cluster qualities of previous algorithms, but focus on accelerating the \texttt{lloyd} equivalent for $K$-medoids as a test setting for our medoid algorithm~\trimed{}. 

\section{Previous works}

\subsection{A related problem: the geometric median}
\label{sec:gmedoid}
A problem closely related to the medoid problem is the geometric median problem. In the vector space $\mathbb{R}^d$ the geometric median, assuming it is unique, is defined as,
\begin{equation}
\label{eqn::geomed}
g(\mathcal{S}) = \argmin_{v \in \mathcal{V}}\left( \sum_{y\in \mathcal{S}} \|v - y\| \right).
\end{equation}
While the medoid of a set is defined in any space with a distance measure, the geometric median is specific to vector spaces, where addition and scalar multiplication are defined. The convexity of the objective function being minimised in~\eqref{eqn::geomed} has enabled the development of fast algorithms. In particular,~\cite{Cohen_GM} present an algorithm which obtains an estimate for the geometric median with relative error $1 + O(\epsilon)$ with complexity $O(nd \log^3(\frac{n}{\epsilon}))$. In $\mathbb{R}^d$, one may hope that such an algorithm can be converted into an exact medoid algorithm, but it is not clear how to do this. 

Thus, while it may be possible that fast geometric median algorithms can provide inspiration in the development of medoid algorithms, they do not work out of the box. Moreover, geometric median algorithms cannot be used for network data as they only work in vector spaces, thus they are useless for the spatial network datasets which we consider in \S\ref{sec:results}.

\subsection{Medoid Algorithms : \texttt{TOPRANK} and \texttt{TOPRANK2}  }
\label{sec:toprank}
In~\cite{eppstein_2004_centrality}, the \texttt{RAND} algorithm for estimating the energy of all elements of a set $\mathcal{S} = \{x(1), \ldots, x(N)\}$ is presented. While \texttt{RAND} is presented in the context of graphs, where the $N$ elements are nodes of an undirected graph and the metric is shortest path length, it can equally well be applied to any set endowed with a distance. The simple idea of \texttt{RAND} is to estimate the energy of each element from a sample of \emph{anchor} nodes  $I$, so that for $j \in \{1, \ldots, N\}$, 
\begin{equation*}
\hat{E}(j) = \frac{1}{|I|}\sum_{i \in I} \dist(x(j),x(i)). 
\end{equation*}
An elegant feature of \texttt{RAND} in the context of sparse graphs is that Dijkstra's algorithm needs only be run from anchor nodes $i \in I$, and not from every node. The key result of~\cite{eppstein_2004_centrality} is the following. Suppose that $\mathcal{S}$ has diameter $\Delta$, that is
\begin{equation*}
\Delta = \max_{(i,j) \in \{1, \ldots, N\}^2}  \dist(x(i),x(j)),
\end{equation*}
and let $\epsilon > 0$ be some error tolerance. If $I$ is of size $\Omega(\log(N) / \epsilon)$, then $\mathbb{P}(|E(j) - \hat{E}(j)| > \epsilon \Delta)$ is $O\left(\frac{1}{N^2}\right)$ for all $j \in \{1, \ldots, N\}$. Using the union bound, this means there is a $O\left(\frac{1}{N}\right)$ probability that at least one energy estimate is off by more than $\epsilon \Delta$, and so we say that \emph{with high probability} (w.h.p.) all errors are less than $\epsilon \Delta$.

\texttt{RAND} forms the basis of the \texttt{TOPRANK} algorithm of~\cite{okamoto_2008_centrality}. Whereas  \texttt{RAND} \whp{} returns an element which has energy within $\epsilon$ of the minimum, \texttt{TOPRANK} is designed to \whp{} return the true medoid. In motivating \texttt{TOPRANK}, \cite{okamoto_2008_centrality} observe that the expected difference between consecutively ranked energies is $O(\Delta/N)$, and so if one wishes to correctly rank all nodes, one needs to distinguish between energies at a scale $\epsilon = \Delta/N$, for which the result of \cite{eppstein_2004_centrality} dictates that  $\Theta (N\log N)$ anchor elements are required with \texttt{RAND}, which is more elements than $\mathcal{S}$ contains. However, to obtain just the highest ranked node should require less information than obtaining a full ranking of nodes, and it is to this task that \texttt{TOPRANK} is adapted. 

The idea behind \texttt{TOPRANK} is to accurately estimate only the energies of promising elements. The algorithm proceeds in two passes, where in the first pass promising elements are earmarked. Specifically, the first pass runs \texttt{RAND} with $N^{2/3}\log^{1/3} (N)$ anchor elements to obtain $\hat{E}(i)$ for $i \in \{1, \ldots, N \}$, and then discards elements whose $\hat{E}(i)$ lies below threshold $\tau$ given by,
\begin{equation}
\label{eq:threshold}
\tau = \argmin_{j \in \{1, \ldots, N\}} \hat{E}(j) + 2  \hat{\Delta} \alpha' \left( \frac{\log n}{n} \right)^{\frac{1}{3}},
\end{equation}
where $\hat{\Delta}$ is an upper bound on $\Delta$ obtained from the anchor nodes, and $\alpha'$ is some constant satisfying $\alpha' > 1$. The second pass computes the true energy of the undiscarded elements, returning the one with lowest true energy. Note that a smaller $\alpha'$ value results in a lower (better) threshold, we discuss this point further in \SM\ref{rand_toprank_2}.

To obtain run time guarantees, \texttt{TOPRANK} requires that the distribution of node energies is non-decreasing near to the minimum, denoted by $E^{*}$. More precisely, letting $f_E$ be the probability distribution of energies, the algorithms require the existence of $\epsilon > 0$ such that,
\begin{equation}
\label{toprankass}
E^{*} \le \tilde{e} < e < E^{*} + \epsilon \implies f_E(\tilde{e}) \le f_E(e).
\end{equation}
If assumption~\ref{toprankass} holds, then the run time is $\tilde{O}(N^{\frac{5}{3}})$. A second algorithm presented in~\cite{okamoto_2008_centrality} is \texttt{TOPRANK2}, where the anchor set $I$ is grown incrementally until some heuristic criterion is met. There is no runtime guarantee for \texttt{TOPRANK2}, although it has the potential to run much faster than \texttt{TOPRANK} under favourable conditions. Pseudocode for \texttt{RAND}, \texttt{TOPRANK} and \texttt{TOPRANK2} is presented in \SM\ref{rand_toprank_2}.


\subsection{$K$-medoids algorithm : \texttt{KMEDS}}
The Voronoi iteration algorithm, which we refer to as \texttt{KMEDS}, is similar to \texttt{lloyd}, the main difference being that cluster medoids are computed instead of cluster means. It has been desribed in the literature at least twice, once in~\cite{the_elements} and then in \cite{park_2009_kmedoids}, where a novel initialisation scheme is developed. Pseudocode is presented in \SM\ref{sec:kmedscode}.

All $N^2$ distances are computed and stored upfront with \texttt{KMEDS}. Then, at each iteration, $KN$ comparisons are made during assignment and $\Omega(N^2/K)$ additions are made during medoid update. The initialisation scheme of \texttt{KMEDS} requires all $N^2$ distances. Each iteration of \texttt{KMEDS} requires retrieving at least $\max\left(KN, N^2/K\right)$ distinct distances, as can be shown by assuming balanced clusters. 

As an alternative to computing all distances upfront, one could store per-cluster distance matrices which get updated on-the fly when assignments change. Using such an approach, the best one could hope for would be $\max\left(KN, N^2/K\right)$ distance calculations and $\Theta(N^2/K)$ memory. If one were to completely forego storing distances in memory and calculate distances only when needed, the number of distance calculations would be at least $r(KN + N^2/K)$, where $r$ is the number of iterations.

The initialisation scheme of~\cite{park_2009_kmedoids} selects $K$ well centered elements as initial medoids. This goes against the general wisdom for $K$-means initialisation, where centroids are initialised to be well separated~\citep{arthur_2007_kmeanspp}. While the new scheme of~\cite{park_2009_kmedoids} performs well on a limited number of small 2-D datasets, we show in \S~\ref{tab:parkinit} that in general uniform initialisation performs as well or better. 

\section{Our new medoid algorithm : \trimed{}}
We present our new algorithm,~\trimed{}, for determining the medoid of set $\mathcal{S} = \{x(1), \ldots, x(N)\}$. Whereas the approach with \texttt{TOPRANK} is to empirically \emph{estimate} $E(i)$ for $i \in \{1, \ldots, N\}$, the approach with \trimed{}, presented as Alg.~\ref{alg::TRITOP}, is to \emph{bound} $E(i)$. When \trimed{} terminates, an index $m^* \in \{1,\ldots,N\}$ has been determined, along with lower bounds $l(i)$ for all $i \in \{1,\ldots,N\}$, such that $E(m^{*}) \le l(i) \le E(i)$, and thus $x(m^{*})$ is the medoid. The bounding approach uses the triangle inequality, as depicted in Figure~\ref{triangles2}.

\begin{algorithm}
\begin{algorithmic}[1]
\STATE $l \gets \underline{0}_N$ {\color{antiquebrass}{\;\;\;// lower bounds on energies, maintained such that $l(i) \le E(i)$ and initialised as $l(i) = 0$. }}
\STATE $m^{cl}, E^{cl} \gets -1, \infty$ {\color{antiquebrass}{\;\;\;// index of best medoid candidate found so far, and its energy.}}
\FOR{$i \in \texttt{shuffle}\left(\{1, \ldots, N\}\right)$} 
\IF{$l(i) < E^{cl}$} 
\FOR{$j \in \{1, \ldots, N \}$}
\STATE $d(j) \gets \dist(x(i),x(j))$
\ENDFOR
\STATE $l(i) \gets \frac{1}{N}\sum_{j = 1}^N d(j)$ {\color{antiquebrass}{\;\;\;// set $l(i)$ to be tight, that is $l(i) = E(i)$.}}
\IF{$l(i) < E^{cl}$}
\STATE $m^{cl}, E^{cl} \gets i, l(i)$
\ENDIF
\FOR{$j \in \{1, \ldots, N\}$}
\STATE $l(j) \gets \max(l(j), |l(i) - d(j)|)$ {\color{antiquebrass}{\;\;\;// using $E(i)$ and $\dist(x(i),x(j))$ to possibly improve bound on $E(j)$.}}
\ENDFOR
\ENDIF
\ENDFOR
\STATE $m^{*}, E^{*} \gets m^{cl}, E^{cl}$
\RETURN $x(m^{*})$
\end{algorithmic}
\caption{The \trimed{} algorithm for computing the medoid of $\{x(1), \ldots, x(N)\}$.}
\label{alg::TRITOP}
\end{algorithm}

The algorithm \trimed{} iterates through the $N$ elements of $\mathcal{S}$. Each time a new element with energy lower than the current lowest energy ($E^{cl}$) is found, the index of the current best medoid ($m^{cl}$) is updated (line 10). Lower bounds on energies are used to quickly eliminate poor medoid candidates (line 4). Specifically, if lower bound $l(i)$ on the energy of element $i$ is greater than or equal to $E^{cl}$, then $i$ is eliminated. If the bound test fails to eliminate element $i$, then it is \emph{computed}, that is, all distances to element $i$ are computed (line 6). The computed distances are used to potentially improve lower bounds for all elements (line 13). Theorem~\ref{thm:reliable} states that \trimed{} finds the medoid. The proof 
relies on showing that lower bounds remain consistent when updated (line 13).

The algorithm is very straightforward to implement, and requires only two additional floating point values per datapoint: for sample $i$, one for $l(i)$ and one for $d(i)$. Computing either all or no distances from a sample makes particularly good sense for network data, where computing all distances to a single node is efficiently performed using Dijkstra's algorithm. 

\begin{figure}
\begin{center}
\includegraphics[width=\columnwidth]{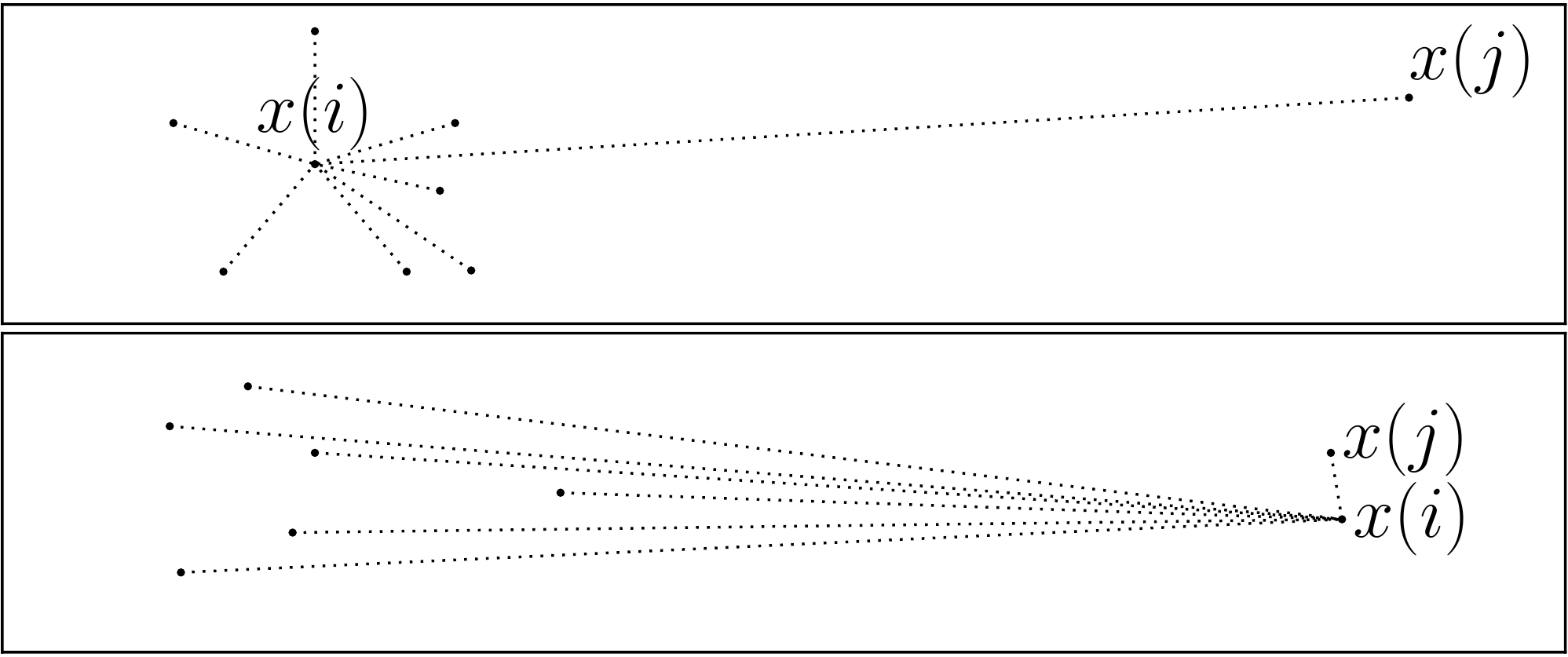}
\caption{Using the inequality $E(j) \ge | E(i) -  \dist(x(i),x(j))\, |$ to eliminate $x(j)$ as a medoid candidate. Computed element $x(i)$ with energy $E(i) \ge E^{cl}$ is used as a pivot to lower bound $E(j)$. The two cases where the inequality is effective are when (case 1, above) $ \dist(x(i),x(j)) - E(i) \ge E^{cl}$ and (case 2, below) $ E(i) - \dist(x(i),x(j))  \ge E^{cl}$, as both lead to $E(j) \ge E^{cl}$ which eliminates $x(j)$ as a medoid candidate.}
\label{triangles2}
\end{center}
\end{figure}

\vspace{1em}

\begin{restatable}{theorem}{reprel}
\label{thm:reliable}
\trimed{} returns the medoid of set $\mathcal{S}$. 
\end{restatable}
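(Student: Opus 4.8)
The plan is to establish two facts: an \emph{invariant} asserting that the maintained quantities $l(i)$ are always genuine lower bounds, $l(i) \le E(i)$, throughout execution; and, using this, that the unique medoid is never eliminated by the bound test and is ultimately recorded as $m^*$.

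First I would prove the bounding inequality underlying line 13. Fix any two indices $i$ and $j$. By the triangle inequality, for every $k$ we have $\dist(x(j),x(k)) \ge \dist(x(i),x(k)) - \dist(x(i),x(j))$ and $\dist(x(j),x(k)) \ge \dist(x(i),x(j)) - \dist(x(i),x(k))$. Averaging each over $k \in \{1,\dots,N\}$ and using the definition of $E$ yields $E(j) \ge E(i) - \dist(x(i),x(j))$ and $E(j) \ge \dist(x(i),x(j)) - E(i)$, hence $E(j) \ge |E(i) - \dist(x(i),x(j))|$. This is exactly the quantity $|l(i) - d(j)|$ appearing on line 13 at the moment $i$ is computed, since then $l(i) = E(i)$ and $d(j) = \dist(x(i),x(j))$.

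Next I would verify the invariant $l(i) \le E(i)$ for all $i$ by induction on the steps of the algorithm. It holds initially, since $l(i) = 0 \le E(i)$ (distances, hence energies, are non-negative). Line 8 sets $l(i) = E(i)$, which preserves it. Line 13 replaces $l(j)$ by $\max(l(j), |l(i)-d(j)|)$; the first argument is a valid lower bound by the inductive hypothesis, and the second is a valid lower bound by the inequality just proved, so their maximum is still at most $E(j)$. No other line alters $l$, so the invariant holds at all times. A parallel observation is that whenever $E^{cl} \ne \infty$ it equals $E(m^{cl})$ for the currently recorded candidate, since $E^{cl}$ is only ever assigned the tight value $l(i) = E(i)$ on line 10.

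Finally I would argue that the medoid survives. Let $m$ be the unique medoid, so $E(m) < E(i)$ for all $i \ne m$. Suppose $m$ is reached by the outer loop but fails the test on line 4, i.e. $l(m) \ge E^{cl}$; then $E^{cl} \ne \infty$, so $E^{cl} = E(m^{cl})$, and the invariant gives $E(m^{cl}) \le l(m) \le E(m)$. Strict minimality forces equality and hence $m^{cl} = m$, meaning $m$ has already been computed and recorded. In the complementary case $m$ passes line 4, its distances are computed, and line 8 sets $l(m) = E(m)$; since $E(m) < E^{cl}$ whenever the candidate differs from $m$, line 9 succeeds and line 10 records $m^{cl} = m$, $E^{cl} = E(m)$. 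Thereafter no element can displace $m$, as none has smaller energy, so $m^* = m$ is returned. The only genuinely delicate point is the invariance step for line 13 — one must check that taking the maximum of two independently valid lower bounds cannot break consistency — which is precisely the step the authors flag.
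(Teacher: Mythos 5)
Your proposal is correct and follows essentially the same route as the paper: the heart of both arguments is the invariant $l(j) \le E(j)$, maintained because line 13 only ever installs the bound $|E(i) - \dist(x(i),x(j))|$, which the averaged triangle inequality shows is at most $E(j)$. The only difference is that you spell out the closing step (that the invariant implies the medoid is never wrongly eliminated and ends up as $m^{*}$), which the paper states just before the theorem ($E(m^{*}) \le l(i) \le E(i)$ at termination) and leaves implicit in the proof itself.
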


\begin{proof}
\label{proof:reliable}
We need to prove that $l(j) \le E(j)$ for all $j \in \{1, \ldots, N\}$ at all iterations of the algorithm. Clearly, as $l(j) = 0$ at initialisation, we have $l(j) \le E(j)$ at initialisation. $E(j)$ does not change, and the only time that $l(j)$ may change is on line 13, where we need to check that $|l(i) - d(j)| \le E(j)$. At line 13, $l(i) = E(i)$  from line 8, and $d(j) = \dist(x(i),x(j))$, so at line 13 we are effectively checking that $|E(i) - \dist(x(i),x(j))| \le E(j)$. But this is a simple consequence of the triangle inequality, as we now show. Using the definition, $ E(j) = \frac{1}{N} \sum_{l = 1}^{N} \dist(x(l),x(j))  $, we have on the one hand,
\begin{align}
 E(j) & \ge  \frac{1}{N} \sum_{l = 1}^{N}  \dist(x(l),x(i)) - \dist(x(i),x(j)) \notag \\
& \ge E(i) - \dist(x(i),x(j)), \label{eqn::dir1}
\end{align}
and on the other hand, 
\begin{align}
E(j) & \ge \frac{1}{N} \sum_{l = 1}^{N} \dist(x(i),x(j)) - \dist(x(l),x(i)) \notag \\
& \ge \dist(x(i),x(j)) - E(i). \label{eqn::dir2} 
\end{align}
Combining~\eqref{eqn::dir1} and~\eqref{eqn::dir2} we obtain the required inequality $|E(i) - \dist(x(i),x(j))| \le E(j)$.
\end{proof}

The bound test (line 4) becomes more effective at later iterations, for two reasons. Firstly, whenever an element is computed, the lower bounds of other samples may increase. Secondly, $E^{cl}$ will decrease whenever a better medoid candidate is found. The main result of this paper, presented as Theorem~\ref{thm:main}, is that in $\mathbb{R}^d$ the expected number of computed elements is $O(N^{\frac{1}{2}})$ under some weak assumptions. We show in \S\ref{sec:results} that the $O(N^{\frac{1}{2}})$ result holds even in settings where the assumptions are not valid or relevent, such as for network data.

The shuffle on line 3 is performed to avoid \whp{} pathological orderings, such as when elements are ordered in descending order of energy which would result in all $N$ elements being computed.

\begin{restatable}[]{theorem}{fta}
\label{thm:main}
Let $\mathcal{S} = \{x(1), \ldots, x(N)\}$ be a set of $N$ elements in $\mathbb{R}^d$, drawn independently from probability distribution function $f_X$. Let the medoid of $\mathcal{S}$ be $x(m^{*})$, and let $E(m^*) = E^{*}$. Suppose that there exist strictly positive constants $\rho, \delta_0$ and $\delta_1$ such that for any set size $N$ with probability $1 - O(1/N)$ 
\begin{equation}
\label{boundyness101}
x \in \mathcal{B}_d(x(m^{*}), \rho) \implies \delta_0 \le f_X(x) \le \delta_1,
\end{equation}
where $\mathcal{B}_d(x, r) = \{x' \in \mathbb{R}^d \;:\; \|x' - x\| \le r\}$. 
Let $\alpha>0$ be a constant (independent of $N$) such that with probability $1 - O(1/N)$ all $i \in \{1, \ldots, N\}$ satisfy,
\begin{align}
\label{strong101}
x(i) \in  \mathcal{B}_d & (x(m^{*}), \rho) \implies  \\
& E(i)  - E^{*} \ge \alpha \|x(i) - x(m^{*}) \|^2. \notag
\end{align}
Then, the expected number of elements computed by \trimed{} is $O\left(\left( V_d[1]\delta_1 + d\left(\frac{4}{\alpha}\right)^d \right) N^{\frac{1}{2}}  \right)$, where $V_d[1] = \pi^{\frac{d}{2}}/(\Gamma{(\frac{d}{2} + 1}))$ is the volume of $\mathcal{B}_d(0, 1)$.
\end{restatable}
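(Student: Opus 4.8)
The plan is to bound the expected number of computed elements by linearity of expectation, writing $\mathbb{E}[\#\text{computed}] = \sum_{j=1}^N \mathbb{P}(j \text{ is computed})$, and then to show that only elements lying within distance $\Theta(N^{-1/(2d)})$ of the medoid have non-negligible probability of being computed. First I would record the two elimination facts implied by Theorem~\ref{thm:reliable} and Figure~\ref{triangles2}: once the medoid has been computed, so that $E^{cl}=E^*$, a computed pivot $x(i)$ eliminates every $x(j)$ with $\|x(i)-x(j)\| \ge E(i)+E^*$ (the far case) and every $x(j)$ with $\|x(i)-x(j)\| \le E(i)-E^*$ (the near case). Combining the near case with the quadratic growth~\eqref{strong101}, a computed pivot at distance $s$ from the medoid eliminates every point within distance $\alpha s^2$ of it, since $E(i)-E^* \ge \alpha s^2$. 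Everything is conditioned on the good event that~\eqref{boundyness101} and~\eqref{strong101} hold, of probability $1-O(1/N)$; on its complement I bound the count by $N$, contributing $O(1)$ to the expectation.

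The core is a \emph{first in its neighbourhood} reduction. For $x(j)$ at distance $r=\|x(j)-x(m^*)\|$, set $B_j = \mathcal{B}_d(x(j), \alpha r^2/4)$. Since any $x(p)\in B_j$ has $\|x(p)-x(m^*)\|\ge r/2$, its elimination radius $\alpha\|x(p)-x(m^*)\|^2$ exceeds the diameter of $B_j$, so any point of $B_j$ computed before $x(j)$ eliminates $x(j)$; chasing the bound updates of line~13 shows that even a point of $B_j$ that is itself eliminated passes on enough of its bound to eliminate $x(j)$, provided the radius is taken with a little slack. Hence $x(j)$ can be computed only if it is the first element of $B_j$ processed in the shuffle, giving $\mathbb{P}(j \text{ computed}) \le 1/|B_j \cap \mathcal{S}|$. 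I then split the sum. For $r \lesssim N^{-1/(2d)}$ the ball $B_j$ typically contains only $x(j)$, so these points are counted directly; their expected number is at most the expected number of samples in $\mathcal{B}_d(x(m^*), N^{-1/(2d)})$, namely $V_d[1]\,(N^{-1/(2d)})^d \delta_1 N = V_d[1]\delta_1 N^{1/2}$, the first term. For $r \gtrsim N^{-1/(2d)}$ I use $|B_j \cap \mathcal{S}| \gtrsim V_d[1](\alpha r^2/4)^d \delta_0 N$ and sum $1/|B_j\cap\mathcal{S}|$ over spherical shells; the resulting integral is proportional to $\int r^{-d-1}\,dr$, dominated by its lower endpoint $r\sim N^{-1/(2d)}$, and is of order $d(4/\alpha)^d N^{1/2}$, the second term.

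Two points remain, the second being the main obstacle. Elements outside $\mathcal{B}_d(x(m^*),\rho)$, where~\eqref{strong101} is unavailable, must be shown negligible: their energies exceed $E^*$ by a constant, so they are removed by the far case of the medoid pivot or the near case of a computed near-medoid pivot, and the expected number computed among them before the medoid is processed is lower order. The genuinely delicate part is that the clean elimination facts require $E^{cl}=E^*$, which holds only after the medoid is processed, whereas the medoid sits at a uniformly random position in the shuffle; before that, $E^{cl}>E^*$ weakens every elimination. I expect to control this coupling between the dynamic threshold $E^{cl}$ and the processing order by conditioning on the medoid's shuffle position, showing that the pre-medoid phase also computes only $O(N^{1/2})$ elements through an analogous first-in-neighbourhood argument centred at the running best candidate, and by choosing the radius of $B_j$ with enough margin to absorb the diameter term when transferring a neighbour's bound to $x(j)$.
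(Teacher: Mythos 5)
Your decomposition is essentially the paper's: count everything inside $\mathcal{B}_d(x(m^*),N^{-1/(2d)})$ via the density upper bound $\delta_1$ (giving the $V_d[1]\delta_1 N^{1/2}$ term), control the annulus out to $\rho$ using exclusion balls of radius $\alpha r^2$ and a shell sum that becomes $\int r^{-(d+1)}\,dr$ dominated at $r\sim N^{-1/(2d)}$, treat the region beyond $\rho$ separately, and patch the pre-medoid phase. The gap is in your central counting device, the reduction $\mathbb{P}(j\text{ computed})\le 1/|B_j\cap\mathcal{S}|$. An element $x(p)\in B_j$ that is processed before $x(j)$ but is itself \emph{eliminated} at line 4 never becomes a pivot and never updates any bound, so it need not eliminate $x(j)$. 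Your proposed repair --- that such a $p$ ``passes on enough of its bound'' --- fails quantitatively: if $p$ was eliminated because $|E(q)-\dist(x(q),x(p))|\ge E^{cl}$ for some earlier pivot $q$, the most that transfers is $l(j)\ge l(p)-\|x(p)-x(j)\|$, and since eliminations can hold with arbitrarily small slack, no shrinking of $B_j$ recovers $l(j)\ge E^{cl}$. Consequently ``$j$ is computed'' only implies ``no element of $B_j$ was \emph{computed} before $j$'', not ``$j$ is first in the shuffle among $B_j\cap\mathcal{S}$'', and the $1/|B_j\cap\mathcal{S}|$ bound (and with it the appeal to $\delta_0$ and to the randomness of the shuffle in this step) does not follow.

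The fact that \emph{is} true --- a computed neighbour within the exclusion radius eliminates $x(j)$ --- already suffices, and is how the paper proceeds: it implies that at most one element per ball of radius $\approx\alpha r^2$ is ever computed, i.e.\ that the computed elements at distance $\approx r$ from the medoid form an $\alpha r^2$-separated set, whose cardinality is bounded \emph{deterministically} by an annulus-packing estimate (Lemmas~\ref{lem:annuluspacking} and~\ref{corenlius}); no density lower bound enters, which is also why the stated constant is $d(4/\alpha)^d$ rather than the $(\delta_1/\delta_0)(4/\alpha)^d$ your shell integral would produce. The paper uses $\delta_0$ only for the part you rightly flag as the main obstacle: showing that within the first $N^{1/2}$ iterations some sample within radius $N^{-2/(3d)}$ of $x(m^*)$ is computed with high probability, so that $E^{cl}-E^*\le\beta N^{-4/(3d)}$ --- this requires an additional \emph{upper} quadratic bound $\beta$ on $E(i)-E^*$, introduced in Theorem~\ref{thm:mainsub} --- and hence all exclusion radii used in the packing argument are correct up to a vanishing factor. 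Your alternative plan of re-running a first-in-neighbourhood argument around the running best candidate inherits the same transfer-of-bounds flaw and would need to be replaced by an argument of this kind.
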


\subsection{On the assumptions in Theorem~\ref{thm:main}}
The assumption of constants $\rho, \delta_0$ and $\delta_1$ made in Theorem~\ref{thm:main} is weak, and only pathological distributions might fail it, as we now discuss. For the assumptions to fail requires that $f_X$ vanishes or diverges at the distribution medoid. Any reasonably behaved distribution does not have this behaviour, as illustrated in Figure~\ref{moreexpl}. The constant $\alpha$ is a strong convexity constant. The existence of $\alpha > 0$ is guaranteed by the existence of $\rho, \delta_0$ and $\delta_1$, as the mean of a sum of uniformly spaced cones converges to a quadratic function. This is illustrated in 1-D in Figure~\ref{sumtris} in \SM\ref{sec:bigproof}, but holds true in any dimension. 

Note that the assumptions made are on the distribution $f_X$, and not on the data itself. This must be so in order to prove complexity results in $N$. 

\begin{figure}[ht!]
\begin{center}
\includegraphics[width=\columnwidth]{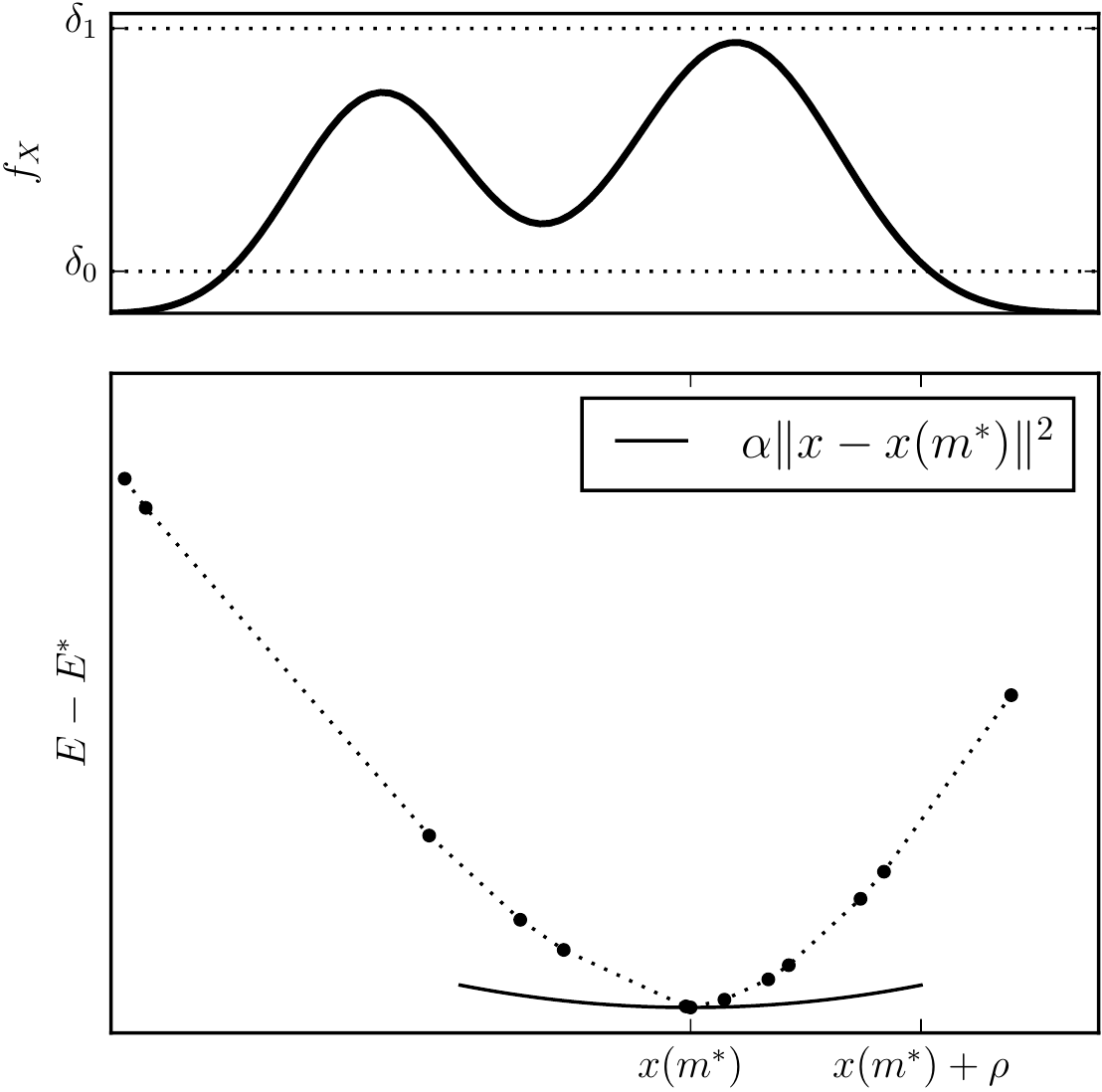}
\caption{Illustration in 1-D of the constants used in Theorem~\ref{thm:main}. Above, $\delta_0$ and $\delta_1$ bound the probability density function in a region containing the distribution medoid. Below, the energy of samples grows quadratically around the medoid $x(m^*)$. The energy $E$ is a sum of cones centered on samples, which is approximately quadratic unless $f_X$ vanishes or explodes, guaranteeing the existence of $\alpha > 0$ required in Theorem~\ref{thm:main}.}
\label{moreexpl}
\end{center}
\end{figure}



\subsection{Sketch of proof of Theorem~\ref{thm:main}}

We now sketch the proof of Theorem~\ref{thm:main}, showing how~\eqref{boundyness101} and \eqref{strong101} are used. A full proof is presented in \SM\ref{sec:bigproof}. Firstly, let the index of the first element after the shuffle on line 3 be $i'$. Then, no elements beyond radius $2E(i')$ of $x(i')$ will subsequently be computed, due to type 1 eliminations (see Figure~\ref{triangles2}). Therefore, all computed elements are contained within $\mathcal{B}_d(x(i'), 2E(i'))$.  

Next, notice that once an element $x(i)$ has been computed in~\trimed{}, no elements in the ball $\mathcal{B}_d(x(i), E(i) - E^{cl})$ will subsequently be computed, due to type 2 eliminations (see Figure~\ref{triangles2}). We refer to such a ball as an \emph{exclusion ball}. By upper bounding the number of exclusion balls contained in $\mathcal{B}_d(x(i'), 2E(i'))$ using a volumetric argument, we can obtain a bound on the number of computed elements,  but obtaining such an upper bound requires that the radii of exclusion ball $E(i) - E^{cl}$ be bounded below by a strictly positive value. However, by using a volumetric argument only beyond a certain positive radius of the medoid (a radius $N^{-1/2d}$), we have $\alpha>0$ in \eqref{strongcon} which provides a lower bound on exclusion ball radii, assuming $E^{cl} \approx E^*$. Using $\delta_0$ we can show that $E^{cl}$ approaches $E^*$ sufficiently fastsufficiently fast to validate the approximation $E^{cl} \approx E^*$.

It then remains to count the number of computed elements within radius $N^{-1/2d}$ of the medoid. One cannot find a strict upper bound here, but using the boundedness of $f_X$ provided by $\delta_1$, we have \whp{} that the number of elements computed within $N^{-1/2d}$ is $O(\delta_1 N^{1/2})$, as the volume of a sphere scales as the $d$'th power of its radius.

\section{Our accelerated $K$-medoids algorithm : \texttt{trikmeds} }
We adapt our new medoid algorithm~\trimed{} and borrow ideas from~\cite{elkan_2003_kmeansicml} to show how~\texttt{KMEDS} can be accelerated. We abandon the initial $N^2$ distance calculations, and only compute distances when necessary. The accelerated version of \texttt{lloyd} of~\cite{elkan_2003_kmeansicml} maintains $KN$ bounds on distances between points and centroids, allowing a large proportion of distance calculations to be eliminated. We use this approach to accelerate assignment in \texttt{trikmeds}, incurring a memory cost $O(KN)$. By adopting the algorithm of~\cite{fast_bounds} or that of~\cite{hamerly_2010_kmeans}, the memory overhead can be reduced to $O(N)$. We accelerate the medoid update step by adapting \trimed{}, reusing lower bounds between iterations, so that \trimed{} is only run from scratch once at the start. Details and pseudocode are presented in \SM\ref{app:trikmeds}. 

One can relax the bound test in \trimed{} so that for $\epsilon_{} > 0$ element $i$ is computed if $l(i)(1 + \epsilon_{}) < E^{cl}$, guaranteeing that an element with energy within a factor $1 + \epsilon_{}$ of $E^{*}$ is found. It is also possible to relax the bound tests in the assignment step of \texttt{trikmeds}, such that the distance to an assigned cluster's medoid is always within a factor $1 + \epsilon_{}$ of the distance to the nearest medoid. We denote by \texttt{trikmeds}-$\epsilon_{}$ the \texttt{trikmeds} algorithm where the update and assignment steps are relaxed as just discussed, with \texttt{trikmeds-0} being exactly \texttt{trikmeds}. The motivation behind such a relaxation is that, at all but the final few iterations, it is probably a waste of computation obtaining medoids and assignments at high resolution, as in subsequent iterations they may change.

\section{Results}
\label{sec:results}
We first compare the performance of the medoid algorithms $\texttt{TOPRANK}, \texttt{TOPRANK2}$ and \trimed{}. We then compare the $K$-medoids algorithms, \texttt{KMEDS} and \texttt{trikmeds}. 

\subsection{Medoid algorithm results}
\label{sec:medresults} 
We compare our new exact medoid algorithm \trimed{} with state-of-the-art approximate algorithms \texttt{TOPRANK} and \texttt{TOPRANK2}. Recall, \cite{okamoto_2008_centrality} prove that the approximate algorithms return \whp{} the true medoid. We confirm that this is the case in all our experiments, where the approximate algorithms return the same element as \trimed{}, which we know to be correct by Theorem~\ref{thm:reliable}. We now focus on comparing computational costs, which are proportional to the number of computed points.
 
Results on artificial datasets are presented in Figure~\ref{rainbow}, where our two main observations relate to scaling in $N$ and dimension $d$. The artificial data are (left) uniformly drawn from $[0,1]^d$ and (right) drawn from $\mathcal{B}_d(0,1)$ with probability of lying within radius $1/2^{1/d}$ of $1/200$, as opposed to $1/2$ as would be the case under uniform density. Details about sampling from this distribution can be found in \SM\ref{sec:cscaling}. Results on a mix of publicly available real and artificial datasets are presented in Table~\ref{philemon} and discussed in \S\ref{res:true}.

\begin{figure*}
\begin{center}
\includegraphics[width=\textwidth]{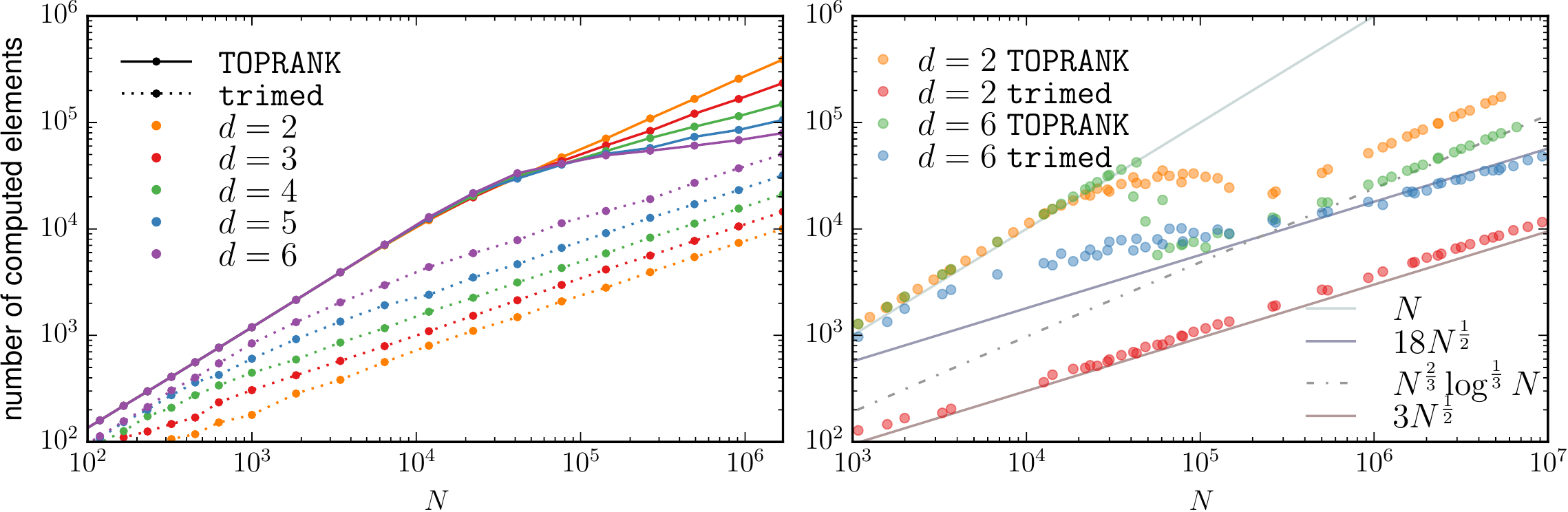}
\caption{Comparison of \texttt{TOPRANK} and our algorithm \trimed{} on simulated data. On the left, points are drawn uniformly from $[0,1]^d$ for $d\in \{2, \ldots, 6\}$, and on the right they are drawn from $\mathcal{B}_d(0,1)$ for $d\in\{2,6\}$, with an increased density near the edge of the ball. Fewer points (elements) are computed by \trimed{} than by \texttt{TOPRANK} in all scenarios. For small $N$, \texttt{TOPRANK} computes $O(N)$ points, before transitioning to $\tilde{O}(N^{2/3})$ computed points for large $N$. \trimed{} computes $O(N^{1/2})$ points. Note that \trimed{} performs better in low-$d$ than in high-$d$, with the reverse trend being true for \texttt{TOPRANK}. These observations are discussed in further detail in the text.}
\label{rainbow}
\end{center}
\vskip -0.1in
\end{figure*}

\subsubsection{Scaling with $N$ and $d$ on artificial datasets}
In Figure~\ref{rainbow} we observe that the number of points computed by \trimed{} is  $O(N^{1/2})$, as predicted by Theorem~\ref{thm:main}. This is illustrated (right) by the close fit of the number of computed points to exact square root curves at sufficiently large $N$ for $d \in \{2,6\}$.

Recall that \texttt{TOPRANK} consists of two passes, a first where $N^{2/3}\log^{1/3} N$ anchor points are computed, and a second where all sub-threshold points are computed. We observe that for small $N$ \texttt{TOPRANK} computes all $N$ points, which corresponds to all points lying below threshold. At sufficiently large $N$ the threshold becomes low enough for all points to be eliminated after the first pass. The effect is particularly dramatic in high dimensions ($d = 6$ on right), where a phase transition is observed between all and no points being computed in the second pass. 

Dimension $d$ appears in Theorem~\ref{thm:main} through a factor $d(4/\alpha)^d$, where $\alpha$ is the strong convexity of the energy at the medoid. In Figure~\ref{rainbow}, we observe that the number of computed points increases with $d$ for fixed $N$, corresponding to a relatively small $\alpha$. The effect of $\alpha$ on the number of computed elements is considered in greater detail in \SM\ref{sec:cscaling}.

In contrast to the above observation that the number of computed points increases as dimension increases for \trimed{}, \texttt{TOPRANK} appears to scale favourably with dimension. This observation can be explained in terms of the distribution of energies, with energies close to $E^{*}$ being less common in higher dimensions, as discussed in \SM\ref{app:topscale}.

\subsubsection{Results on publicly available real and simulated datasets}
\label{res:true} 

We present the datasets used here in detail in \SM\ref{app:datasets}. For all datasets, algorithms \texttt{TOPRANK}, \texttt{TOPRANK2} and \trimed{} were run 10 times with a distinct seed, and the mean number of iterations ($\hat{n}$) over the 10 runs was computed. We observe that our algorithm \trimed{} is the best performing algorithm on all datasets, although in high-dimensions (MNIST-0) and on social network data (Gnutella) no algorithm computes significantly fewer than $N$ elements. The failure in high-dimensions (MNIST-0) of \trimed{} is in agreement with Theorem~\ref{thm:main}, where dimension appears as the exponent of a constant term. The small world network data, Gnutella, can be embedded in a high-dimensional Euclidean space, and thus the failure on this dataset can also be considered as being due to high-dimensions. For low-dimensional real and spatial network data, \trimed{} consistently computes $O(N^{1/2})$ elements. 

\subsubsection{But who needs the exact medoid anyway?}
\label{res:soexact}
A valid criticism that could be raised at this stage would be that for large datasets, finding the exact medoid is probably overkill, as any point with energy reasonably close to $E^{*}$ suffices for most applications. But consider, the \texttt{RAND} algorithm requires computing $\log N / \epsilon^2$ elements to confidently return an element with energy within $\epsilon E^{*}$ of $E^{*}$. For $N = 10^5$ and $\epsilon = 0.05$, this is $4600$, already more than \trimed{} requires to obtain the exact medoid on low-$d$ datasets of comparable size.

\setlength{\tabcolsep}{4.9pt}
\begin{table*}[t]
\begin{center}
\begin{tabular}{|ccc|ccc|}
\hline
\multicolumn{3}{|c|}{  }  & \multicolumn{1}{c|}{ \texttt{TOPRANK} } & \multicolumn{1}{c|}{ \texttt{TOPRANK2} } & \multicolumn{1}{c|}{ \trimed{} }  \\
\hline
\multicolumn{3}{|c|}{  } & \multicolumn{2}{c|}{  } \\[-10pt]
dataset & type & $N$ & $\hat{n}$ & $\hat{n}$ & $\hat{n}$  \\
\hline
Birch 1 & 2-d & $1.0 \times 10^5$ & 57944    & 100180    & \textbf{2180}    \\ 
Birch 2 & 2-d & $1.0 \times 10^5$ & 66062    & 100180    & \textbf{2208}    \\ 
Europe & 2-d & $1.6 \times 10^5$ & 176095    & 169535    & \textbf{2862}    \\ 
U-Sensor Net & u-graph & $3.6 \times 10^5$ & 113838    & 327216    & \textbf{1593}    \\ 
D-Sensor Net & d-graph & $3.6 \times 10^5$ & 99896    & 176967    & \textbf{1372}    \\ 
Pennsylvania road & u-graph & $1.1 \times 10^6$ & 216390    & time-out    & \textbf{2633}    \\ 
Europe rail & u-graph & $4.6 \times 10^4$ & 35913    & 47041    & \textbf{518}    \\ 
Gnutella & d-graph & $6.3 \times 10^3$ & 7043    & 6407    & \textbf{6328}    \\ 
MNIST & 784-d & $6.7 \times 10^3$ & 7472    & 6799    & \textbf{6514}    \\ 
\hline
\end{tabular}
\caption{Comparison of \texttt{TOPRANK}, \texttt{TOPRANK2} and our algorithm \trimed{} on publicly available real and simulated datasets. Column 2 provides the type of the dataset, where `$x$-d' denotes $x$-dimensional vector data, while `d-graph' and `u-graph' denote directed and undirected graphs respectively. Column $\hat{n}$ gives the mean number of elements computed over 10 runs. Our proposed \trimed{} algorithm obtains the true medoid with far fewer computed points in low dimensions and on spatial network data. On the social network dataset (Gnutella) and the very high-$d$ dataset (MNIST), all algorithms fail to provide speed-up, computing approximately $N$ elements.}
\label{philemon}
\end{center}
\vskip 1.0em
\end{table*}

\subsection{$K$-medoids algorithm results}
\label{sec:kmedsres}
With $N$ elements to cluster, \texttt{KMEDS} is $\Theta(N^2)$ in memory, rendering it unusable on even moderately large datasets. To compare the initialisation scheme proposed in~\cite{park_2009_kmedoids} to random initialisation, we have performed experiments on 14 small datasets, with $K \in \{10, \lceil N^{1/2}\rceil, \lceil N/10\rceil \}$. For each of these 42 experimental set-ups, we run the deterministic \texttt{KMEDS} initialisation once, and then uniform random initialisation, 10 times. Comparing the mean final energy of the two initialisation schemes, in only 9 of 42 cases does \texttt{KMEDS} initialisation result in a lower mean final energy. A Table containing all results from these experiments in presented in \SM\ref{park_initialisation}. 

Having demonstrated that random uniform initialisation performs at least as well as the initialisation scheme of \texttt{KMEDS}, and noting that \texttt{trikmeds-0} returns exactly the same clustering as would \texttt{KMEDS} with uniform random initialisation, we turn our attention to the computational performance of~\texttt{trikmeds}. Table~\ref{tab::kmedoidsresults} presents results on 4 datasets, each described in \SM\ref{app:datasets}. The first numerical column is the relative number of distance calculations using \texttt{trikmeds-0} and \texttt{KMEDS}, where large savings in distance calculations, especially in low-dimensions, are observed. Columns $\phi_c$ and $\phi_E$ are the number of distance calculations and energies respectively, using $\epsilon_{} \in \{0.01, 0.1\}$, relative to $\epsilon_{} = 0$. We observe large reductions in the number of distance computations with only minor increases in energy.

\setlength{\tabcolsep}{3.2pt}
\begin{table*}[t]
\begin{center}
\begin{tabular}{|ccc|c|cc|cc|c|cc|cc|}
\hline
\multicolumn{3}{|c|}{  }  & \multicolumn{5}{c|}{ $K = 10$ } & \multicolumn{5}{c|}{ $K = \lceil\sqrt{N}\rceil$ }   \\
\hline
\multicolumn{3}{|c|}{  }  & $\epsilon_{}=0$ & \multicolumn{2}{c|}{ $\epsilon_{} = 0.01$ } & \multicolumn{2}{c|}{ $\epsilon_{} = 0.1$ } & $\epsilon_{}=0$ & \multicolumn{2}{c|}{ $\epsilon_{} = 0.01$ } & \multicolumn{2}{c|}{ $\epsilon_{} = 0.1$ }    \\
\hline
Dataset  & $N$ & $d$ & $N_c/N^2$ & $\phi_c$ & $\phi_E$ & $\phi_c$ & $\phi_E$  & $N_c/N^2$ & $\phi_c$ & $\phi_E$ & $\phi_c$ & $\phi_E$ \\
\hline
Europe & $1.6\times10^5$ & 2 & 0.067 & 0.33 & 1.004 & 0.01 & 1.054  & 0.008 &  0.68 & 1.031 & 0.39  & 1.090 \\
Conflong & $1.6\times10^5$ & 3 & 0.042 & 0.67 & 1.001 & 0.08 & 1.014  & 0.006 &  0.92 & 1.003 & 0.61  & 1.026 \\
Colormo & $6.8\times10^4$ & 9 & 0.163 & 0.92 & 1.000 & 0.35 & 1.015  & 0.011 &  0.98 & 1.000 & 0.82  & 1.005 \\
MNIST50 & $6.0\times10^4$ & 50 & 0.280 & 0.99 & 1.000 & 0.95 & 1.001  & 0.019 &  0.99 & 1.001 & 0.97  & 1.001 \\
\hline
\end{tabular}
\caption{Relative numbers of distance calculations and final energies using \texttt{trikmeds-$\epsilon_{}$} for  $\epsilon_{} \in \{0, 0.01,0.1\}$. The number of distance calculations with \texttt{trikmeds-0} is $N_c$, presented here relative to the number computed using \texttt{KMEDS} ($N^2$) in column $N_c/N^2$. The number of distance calculations with $\epsilon_{} \in \{0.01, 0.1\}$ relative to \texttt{trikmeds-0} are given in columns $\phi_c$, so $\phi_c = 0.33$ means $3\times$ fewer calculations than with $\epsilon = 0$. The final energies with $\epsilon_{} \in \{0.01, 0.1\}$ relative to \texttt{trikmeds-0} are given in columns $\phi_E$. We see that \texttt{trikmeds-0} uses significantly fewer distance calculations than would \texttt{KMEDS}, especially in low-dimensions where a greater than $K\times$ reduction is observed ($N_C/N^2 < 1/K$). For low-$d$, additional relaxation further increases the saving in distance calculations with little cost to final energy.}
\label{tab::kmedoidsresults}
\end{center}
\vskip 1.0em
\end{table*}

\section{Conclusion and future work}

We have presented our new \trimed{} algorithm for computing the medoid of a set, and provided strong theoretical guarantees about its performance in $\mathbb{R}^d$. In low-dimensions, it outperforms the state-of-the-art approximate algorithm on a large selection of datasets. The algorithm is very simple to implement, and can easily be extended to the general ranking problem. In the future, we propose to explore the idea of using more complex triangle inequality bounds involving several points, with as goal to improve on the $O(N^{1/2})$ number of computed points.

We have demonstrated how \trimed{}, when combined with the approach of~\cite{elkan_2003_kmeansicml}, can greatly reduce the number of distance calculations required by the Voronoi iteration $K$-medoids algorithm of~\cite{park_2009_kmedoids}. In the future we would like to replace the strategy of~\cite{elkan_2003_kmeansicml} with that of \cite{hamerly_2010_kmeans}, which will be better adapted to graph clustering as either all or no distances are computed with it, making it more amenable to Dijkstra's algorithm. 

\section*{Acknowledgements}
The authors are grateful to Wei Chen for helpful discussions of the \texttt{TOPRANK} algorithm.
James Newling was funded by the Hasler Foundation under the grant 13018 MASH2.

\bibliographystyle{hapalike}

\bibliography{kmedoids}

\onecolumn
\appendix

\renewcommand{\thesection}{SM-\Alph{section}}




\section{On the difficulty of the medoid problem}
\label{app:difficulty}
We construct an example showing that no general purpose algorithm exists to solve the medoid problem in $o(N^2)$. Consider an almost fully connected graph containing $N = 2m + 1$ nodes, where the graph is exactly $m$ edges short of being fully connected: one node has $2m$ edges and the others have $2m-1$ edges. The graph has $2m^2$ edges.  With the shortest path metric, it is easy to see that the node with $2m$ edges is the medoid, hence the medoid problem is as difficult as finding the node with $2m$ edges. But, supposing that the edges are provided as an unsorted adjacency list, it is clearly an $O(m^2)$ task to determine which node has $2m$ edges as one must look at all edges until a node with $2m$ edges is found. Thus determining the medoid is $O(m^2)$ which is $O(N^2)$.


\section{\texttt{KMEDS} pseudocode}
Alg.~\ref{alg:park} presents the \texttt{KMEDS} algorithm of~\cite{park_2009_kmedoids}, with the novel initialisation of \texttt{KMEDS} on line 1. \texttt{KMEDS} is essentially \texttt{lloyd}, with medoids instead of means. 
\label{sec:kmedscode}
\begin{algorithm}
\begin{algorithmic}[1]
\STATE Set all distances $D(i,j) \gets \|x(i) - x(j)\|$ and sums $S(i) \gets \sum_{j \in \{1, \ldots, N\}} D(i,j)$
\STATE Initialise medoid indices as $K$ indices minimising $f(i) = \sum_{j \in \{1, \ldots, N\}} D(i,j) / S(j) $ 
\WHILE{Some convergence criterion has not been met}
\STATE Assign each element to the cluster whose medoid is nearest to the element
\STATE Update cluster medoids according to assignments made above
\ENDWHILE
\end{algorithmic}
\caption{\texttt{KMEDS} for clustering data $\{x(1), \ldots, x(N)\}$ around $K$ medoids}
\label{alg:park}
\end{algorithm}

\section{\texttt{RAND}, \texttt{TOPRANK} and \texttt{TOPRANK2} pseudocode}
\label{rand_toprank_2} 
We present pseudocode for the \texttt{RAND}, \texttt{TOPRANK} and \texttt{TOPRANK2} algorithms of~\cite{okamoto_2008_centrality}, and discuss the explicit and implicit constants.
\subsection{On the number of anchor elements in \texttt{TOPRANK} : the constant in $\Theta(N^{\frac{2}{3}}\left(\log N \right)^{\frac{1}{3}})$}
\label{app:alphab} 
Note that the number of anchor points used in \texttt{TOPRANK} does not affect the result that the medoid is \whp{} returned. However, \cite{okamoto_2008_centrality} show that by choosing the size of the anchor set to be $q\left(\log N \right)^{\frac{1}{3}}$ for any $q$, the run time is guaranteed to be $\tilde{O}(N^{5/3})$.  They do not suggest a specific $q$, the optimal $q$ being dataset dependant. We choose $q = 1$.

Consider Figure~\ref{rainbow} in Section~\ref{sec:medresults} for example, where $q = 1$. Had $q$ be chosen to be less than $1$, the line $\texttt{ncomputed} = N^{2/3}\log^{1/3}N$ to which \texttt{TOPRANK} runs parallel for large $N$ would be shifted up or down by $\log q$, however the $N$ at which the transition from $\texttt{ncomputed} = N^{2/3}\log^{1/3}N$ to $\texttt{ncomputed} = N^{2/3}\log^{1/3}N$ takes place would also change. 

\subsection{On the parameter $\alpha'$ in \texttt{TOPRANK} and \texttt{TOPRANK2}}
The threshold $\tau$ in~\eqref{eq:threshold} is proportional to the parameter $\alpha'$. In~\cite{okamoto_2008_centrality}, it is stated that $\alpha'$ should be some value greater than 1. Note that the smaller $\alpha'$ is, the lower the threshold is, and hence fewer the number of computed points is, thus $\alpha' = 1.00001$ would be a fair choice.  We use $\alpha' = 1$ in our experiments, and observe that the correct medoid is returned in all experiments. 

Personal correspondence with the authors of~\cite{okamoto_2008_centrality} has brought into doubt the proof of the result that the medoid is \whp{} returned for any $\alpha'$ where $\alpha' > 1$. In our most recent correspondence, the authors suggest that the \whp{} result can be proven with the more conservative bound of $\alpha' > \sqrt{1.5}$. Moreover, we show in \SM\ref{sec:lerror} that  $\alpha' > 1$ is good enough to return the medoid with probability $N^{-(\alpha' - 1)}$, a probability which still tends to $0$ as $N$ grows large, but not a \whp{} result. Please refer to \SM\ref{sec:lerror} for further details on our correspondence with the authors.

\subsection{On the parameters specific to \texttt{TOPRANK2}}
In addition to $\alpha'$, \texttt{TOPRANK2} requires two parameters to be set. The first is $l_0$, the starting anchor set size, and the second is $q$, the amount by which $l$ should be incremented at each iteration. \cite{okamoto_2008_centrality} suggest taking $l_0$ to be the number of top ranked nodes required, which in our case would be $l_0 = k = 1$. However, in our experience this is too small as all nodes lie well within the threshold and thus when $l$ increases there is no change to number below threshold, which makes the algorithm break out of the search for the optimal $l$ too early. Indeed, $l_0$ needs to be chosen so that at least some points have energies greater than the threshold, which in our experiments is already quite large. We choose $l_0 = \sqrt{N}$, as any value larger than $N^{2/3}$ would make \texttt{TOPRANK2} redundant to \texttt{TOPRANK}. The parameter $q$ we take to be $\log{N}$ as suggested by~\cite{okamoto_2008_centrality}.

\begin{algorithm}
\begin{algorithmic}
\STATE $I \gets $ random uniform sample from $\{1, \ldots, N\}$
\ACOMM{Compute all distances from anchor elements ($I$), using Dijkstra's algorithm on graphs}
\FOR{$i \in I$} 
\FOR{$j \in \{1, \ldots, N \}$}
\STATE $d(i,j) \gets \|x(i) - x(j)\|$, 
\ENDFOR
\ENDFOR
\ACOMM{Estimate energies as mean distances to anchor elements}
\FOR{$j \in \{1, \ldots, N \}$}
\STATE $\hat{E}(j) \gets \frac{1}{|I|}\sum_{i\in I}d(i,j)$
\ENDFOR
\RETURN $\hat{E}$
\end{algorithmic}
\caption{\texttt{RAND} for estimating energies of elements of set $S$~\citep{eppstein_2004_centrality}.}
\label{alg::RAND}
\end{algorithm}

\begin{algorithm}
\begin{algorithmic}
\STATE $l \gets N^{\frac{2}{3}} \left(\log N \right)^{\frac{1}{3}}$ {\color{antiquebrass}{\;\;\;// \cite{okamoto_2008_centrality} state that $l$ should be $\Theta(\left(\log N \right)^{\frac{1}{3}})$, the choice of 1 as the constant is arbitrary (see comments in the text of Section~\ref{app:alphab}). }} 
\STATE Run \texttt{RAND} with uniform random $I$ of size $l$ to get $\hat{E}(i)$ for $i \in \{1, \ldots, N\}.$
\STATE Sort $\hat{E}$ so that $\hat{E}[1] \le \hat{E}[2] \le \ldots \le \hat{E}[N]$
\STATE  $\hat{\Delta} \gets 2 \min_{i \in I} \max_{j \in \{1, \ldots, N\}} \|x(i) - x(j)\|$ {\color{antiquebrass}{\;\;\;// where $\|x(i) - x(j)\|$ computed in \texttt{RAND} }}
\STATE $Q \gets \left\{ i \in \{1, \ldots, N \} \;|\; \hat{E}(i) \le \hat{E}[k] + 2 \alpha' \Delta\sqrt{\frac{\log(n)}{l}} \right\}$.
\STATE Compute exact energies of all elements in $Q$ and return the element with the lowest energy.
\end{algorithmic}
\caption{\texttt{TOPRANK} for obtaining top $k$ ranked elements of $S$~\citep{okamoto_2008_centrality}.}
\label{alg::TOPRANK}
\end{algorithm}

\begin{algorithm}
\begin{algorithmic}
\ACOMM{\color{antiquebrass}{In~\cite{okamoto_2008_centrality}, it is suggested that $l_0$ be taken as $k$, which in the case of the medoid problem is $1$. We have experimented with several choices for $l_0$, as discussed in the text. }}
\STATE $l \gets l_0$ 
\STATE Run \texttt{RAND} with uniform random $I$ of size $l$ to get $\hat{E}(i)$ for $i \in \{1, \ldots, N\}.$
\STATE  $\hat{\Delta} \gets 2 \min_{i \in I} \max_{j \in \{1, \ldots, N\}} \|x(i) - x(j)\|$ {\color{antiquebrass}{\;\;\;// where $\|x(i) - x(j)\|$ computed in \texttt{RAND} }}
\STATE Sort $\hat{E}$ so that $\hat{E}[1] \le \hat{E}[2] \le \ldots \le \hat{E}[N]$
\STATE $Q \gets \left\{ i \in \{1, \ldots, N \} \;|\; \hat{E}(i) \le \hat{E}[k] + 2 \alpha' \Delta\sqrt{\frac{\log(n)}{l}} \right\}$.
\STATE $g \gets 1$  
\WHILE{$g$ is $1$}
\STATE $p \gets |Q|$
\ACOMM{\color{antiquebrass}{The recommendation for $q$ in~\cite{okamoto_2008_centrality} is $\log(n)$, we follow the suggestion}}
\STATE Increment $I$ with $q$ new anchor points 
\STATE Update $\hat{E}$ for all data according to new anchor points
\STATE $l \gets |I|$
\STATE $\hat{\Delta} \gets 2 \min_{i \in I} \max_{j \in \{1, \ldots, N\}} \|x(i) - x(j)\|$
\STATE Sort $\hat{E}$ so that $\hat{E}[1] \le \hat{E}[2] \le \ldots \le \hat{E}[N]$
\STATE $Q \gets \left\{ i \in \{1, \ldots, N \} \;|\; \hat{E}(i) \le \hat{E}[k] + 2 \alpha' \Delta\sqrt{\frac{\log(n)}{l}} \right\}$
\STATE $p' \gets |Q|$
\IF{$p - p' < \log{(n)} $}
\STATE $g \gets 0$
\ENDIF
\ENDWHILE
\STATE Compute exact energies of all elements in $Q$ and return the element with the lowest energy

\end{algorithmic}
\caption{\texttt{TOPRANK2} for obtaining top $k$ ranked elements of $S$~\citep{okamoto_2008_centrality}.}
\label{alg::TOPRANK2}
\end{algorithm}

\section{On the proof that \texttt{TOPRANK} returns the medoid with high probability}
\label{sec:lerror}
Through correspondence with the authors of~\cite{okamoto_2008_centrality}, we have located a small problem in the proof that the medoid is returned \whp{} for $\alpha' > 1$, the problem lying in the second inequality of Lemma 1. To arrive at this inequality, the authors have used the fact that for all $i$,
\begin{equation}
\label{truestatement}
\mathbb{P}(E(i) \ge \hat{E}(i) + f(l) \cdot \Delta) \ge 1 - \frac{1}{2N^2},
\end{equation}
which is a simple consequence of the Hoeffding inequality as shown in~\cite{eppstein_2004_centrality}. Essentially~\eqref{truestatement} says that, for a fixed node $i$, from which the mean distance to other nodes is $E(i)$, if one uniformly samples $l$ distances to $i$ and computes the mean $\hat{E}(i)$, the probability that $\hat{E}(i)$ is less than $E(i) + f(l)$ is greater than $1 - \frac{1}{2N^2}$.

The inequality~\eqref{truestatement} is true for a fixed node $i$. However, it no longer holds if $i$ is selected to be the node with the lowest $\hat{E}(i)$. To illustrate this, suppose that $E(i) = 1$ for all $i$, and compute $\hat{E}(i)$ for all $i$. Let $\hat{E}^* = \argmin_i \hat{E}(i)$. Now, we have a strong prior on $\hat{E}^*$ being significantly less than $1$, and~\eqref{truestatement} no longer holds as a statement made about $\hat{E}^*$.

In personal correspondence, the authors show that the problem can be fixed by the use of an additional layer of union bounding, with a correction to be published (if not already done so at time of writing). However, the additional layer of union bound requires a more conservative constraint on $\alpha'$, which is $\alpha' > 2$, although the authors propose that the \whp{} result can be proven with $\alpha' > \sqrt{1.5}$ for $N$ sufficiently large. We now present a small proof proving the \whp{} result for $\alpha' > \sqrt{2}$ for $N$ sufficiently large, with at the same time $\alpha' > 1$ guaranteeing that the medoid is returned with probability $O(N^{\alpha' -1})$.

\subsection{That the medoid is returned \emph{with high probability} holds for $\alpha' > \sqrt{2}$ and that \emph{with vanishing probability} it is returned for $\alpha' > 1$}
Recall that we have $N$ nodes with \emph{energies} $E(1),\ldots, E(n)$. We wish to find the $k$ lowest energy nodes (the original setting of~\cite{okamoto_2008_centrality}). From Hoeffding's inequality we have,
\begin{equation}
\label{eq1}
\mathbb{P}(|E(i) - \hat{E}(i)| \ge \epsilon \Delta ) \le 2\exp{ \left(-l \epsilon^2 \right)}.\\
\end{equation}
Set the probability on the right hand side of~\ref{eq1} to be $2/N^{1 + \beta}$, that is,
\begin{equation*}
2\exp{ \left(-l \epsilon^2 \right)} = 2/N^{1 + \beta},
\end{equation*}
which corresponds to 
\begin{equation*}
\epsilon = \sqrt{\left( \frac{1 + \beta}{l}\right) \log{(N)}}\;\; \coloneqq \;\; \tilde{f}(l).
\end{equation*}
Clearly $\sqrt{1 + \beta}$ corresponds to $\alpha'$. With this notation we have,
\begin{equation}
\label{eq2}
\mathbb{P}(|E(i) - \hat{E}(i)| \ge \tilde{f}(l) \Delta ) \le \frac{2}{N^{1+\beta}}.\\
\end{equation}
Applying the union bound to~\eqref{eq2} we have,
\begin{equation}
\label{eqn}
\mathbb{P} \left( \lnot \left( \displaystyle \wedge_{i \in \{1, \ldots, N\}}  |E(i) - \hat{E}(i)| \le \tilde{f}(l) \Delta \right)  \right)  \le \frac{2}{N^{\beta}}.\\
\end{equation}

Recall that we wish to obtain the $k$ nodes with lowest energy. Denote by $r(j)$ the index of the node with the $j$'th lowest energy, so that
\begin{equation*}
E(r(1)) \le \ldots \le E(r(j)) \le \ldots \le E(r(N)).
\end{equation*}
Denote by $\hat{r}(j)$ the index of the node with the $j$'th lowest estimated energy, so that 
\begin{equation*}
\hat{E}(\hat{r}(1)) \le \ldots \le \hat{E}(\hat{r}(j)) \le \ldots \le \hat{E}(\hat{r}(N)).
\end{equation*}
Now assume that for all $i$, it is true that $ |E(i) - \hat{E}(i)| \le \tilde{f}(l)$. Then consider, for $j \le k$, 
\begin{align}
\hat{E}({\hat{r}(k)}) - \hat{E}(r(j)) & = 
\underbrace{\Big(\hat{E}({\hat{r}(k)}) - E(r(k))\Big)}_{\ge - \tilde{f}(l) \Delta} + \underbrace{\Big( E(r(k)) -  E(r(j))\Big)}_{\ge 0} + \underbrace{\Big(E(r(j)) - \hat{E}(r(j))\Big)}_{\ge - \tilde{f}(l)\Delta},\label{eq3} \\
& \ge  -2\tilde{f}(l)\Delta. \notag
\end{align}
The first bound in~\eqref{eq3} is obtained by considering the most extreme case possible under the assumption, which is $\hat{E}(i) = a(E) - \tilde{f}(l)$ for all $i$. The second bound follows from $j \le k$, and the third bound follows directly from the assumption. We thus have that, under the assumption, 
\begin{equation*}
\hat{E}(r(j)) \le \hat{E}({\hat{r}(k)}) + 2\tilde{f}(l)\Delta,
\end{equation*}
which says that all nodes of rank less than or equal to $k$ have approximate energy less than $\hat{E}({\hat{r}(k)}) + 2\tilde{f}(l)\Delta$.  As the assumption holds with probability greater than $1 - 2/N^{\beta}$ by~\eqref{eqn}, we are done. Take $\beta = 1$ if you want the statement \emph{with high probability}, that is 
\begin{equation*}
\epsilon =  \sqrt{\frac{2\log(n)}{l}},
\end{equation*}
but for any $\beta > 0$, which corresponds to $\alpha' > 1$, the probability of failing to return the $k$ lowest energy nodes tends to $0$ as $N$ grows.

\section{On the initialisation of~\cite{park_2009_kmedoids}}
\label{park_initialisation}

\begin{table}
\begin{center}
\begin{tabular}{|ccc|cc|cc|cc|}
\hline
\multicolumn{3}{|c|}{  }  & \multicolumn{2}{c|}{ $K = 10$ } & \multicolumn{2}{c|}{ $K = \left\lceil\sqrt{N}\right\rceil$ }  & \multicolumn{2}{c|}{ $K = \left\lceil\frac{N}{10}\right\rceil$ }   \\
\hline
Dataset & $N$ & $d$ & $\mu_{\texttt{u}}/\mu_{\texttt{park}}$ & $\sigma_{\texttt{u}}/\mu_{\texttt{park}}$ &  $\mu_{\texttt{u}}/\mu_{\texttt{park}}$ & $\sigma_{\texttt{u}}/\mu_{\texttt{park}}$ & $\mu_{\texttt{u}}/\mu_{\texttt{park}}$ & $\sigma_{\texttt{u}}/\mu_{\texttt{park}}$ \\
\hline
gassensor & 256 & 128 & 1.09 & 0.08 & $\mathbf{0.90}$ & 0.03 & $\mathbf{0.83}$ & 0.01 \\ 
house16H & 1927 & 17 & 1.01 & 0.02 & $\mathbf{0.97}$ & 0.01 & $\mathbf{0.93}$ & 0.01 \\ 
S1 & 5000 & 2 & 1.05 & 0.05 & $\mathbf{0.75}$ & 0.01 & $\mathbf{0.32}$ & 0.01 \\ 
S2 & 5000 & 2 & 1.04 & 0.07 & $\mathbf{0.68}$ & 0.01 & $\mathbf{0.34}$ & 0.00 \\ 
S3 & 5000 & 2 & 1.03 & 0.05 & $\mathbf{0.76}$ & 0.01 & $\mathbf{0.35}$ & 0.00 \\ 
S4 & 5000 & 2 & 1.02 & 0.03 & $\mathbf{0.75}$ & 0.01 & $\mathbf{0.41}$ & 0.01 \\ 
A1 & 3000 & 2 & $\mathbf{0.82}$ & 0.03 & $\mathbf{0.43}$ & 0.01 & $\mathbf{0.19}$ & 0.00 \\ 
A2 & 5250 & 2 & $\mathbf{0.98}$ & 0.03 & $\mathbf{0.47}$ & 0.01 & $\mathbf{0.25}$ & 0.00 \\ 
A3 & 7500 & 2 & $\mathbf{0.96}$ & 0.02 & $\mathbf{0.42}$ & 0.02 & $\mathbf{0.22}$ & 0.00 \\ 
thyroid & 215 & 5 & $\mathbf{0.95}$ & 0.08 & $\mathbf{0.97}$ & 0.04 & $\mathbf{0.93}$ & 0.04 \\ 
yeast & 1484 & 8 & 1.00 & 0.02 & $\mathbf{0.96}$ & 0.02 & $\mathbf{0.91}$ & 0.02 \\ 
wine & 178 & 14 & 1.01 & 0.02 & 1.02 & 0.01 & $\mathbf{0.98}$ & 0.02 \\ 
breast & 699 & 9 & $\mathbf{0.79}$ & 0.03 & $\mathbf{0.77}$ & 0.02 & $\mathbf{0.68}$ & 0.02 \\ 
spiral & 312 & 3 & 1.03 & 0.03 & $\mathbf{0.99}$ & 0.02 & $\mathbf{0.82}$ & 0.03 \\ 
\hline
\end{tabular}
\end{center}
\caption{Comparing the initialisation scheme proposed in~\cite{park_2009_kmedoids} with random uniform initialisation for the \texttt{KMEDS} algorithm. The final energy using the deterministic scheme proposed in~\cite{park_2009_kmedoids} is $\mu_{\texttt{park}}$. The mean over 10 random uniform initialisations is $\mu_{u}$, and the corresponding standard deviation is $\sigma_u$. For small $K$ ($K = 10)$, the performances using the two schemes are comparable, while for larger $K$, it is clear that uniform initialisation performs much better on the majority of datasets. }
\label{tab:parkinit}
\end{table}

In Table~\ref{tab:parkinit} we present the full results of the 48 experiments comparing the initialisation proposed in~\cite{park_2009_kmedoids} with simple uniform initialisation. The 14 datasets are all available from \url{https://cs.joensuu.fi/sipu/datasets/}.

\section{Scaling with $\alpha$, $N$, and dimension $d$}
\label{sec:cscaling}
We perform more experiments to provide further validation of Theorem~\ref{thm:main}. In particular, we check how the number of computed elements scales with $N$, $d$ and $\alpha$. We generate data from a unit ball in various dimensions, according to two density functions with different strong convexity constants $\alpha$. The first density function is uniform, so that the density everywhere in the ball is uniform. To sample from this distribution, we generate two random variables, $X_1 \sim \mathcal{N}_d(0, 1)$ and $X_2 \sim U(0,1)$ and use 
\begin{equation}
\label{uniformball}
X_3 = X_1/\|X_1\| \cdot X_2^{\frac{1}{d}},
\end{equation}
as a sample from the unit ball $\mathcal{B}_d(0,1)$ with uniform distribution.  The second distribution we consider has a higher density beyond radius $(1/2)^{1/d}$. Specifically, within this radius the density is $19\times$ lower than beyond this radius. To sample from this distribution, we sample $X_3$ according to~\eqref{uniformball}, and then points lying within radius $(1/2)^{1/d}$ are with probability $1/10$ re-sampled uniformly beyond this radius. 

\begin{figure}
\begin{center}
\includegraphics[width=\textwidth]{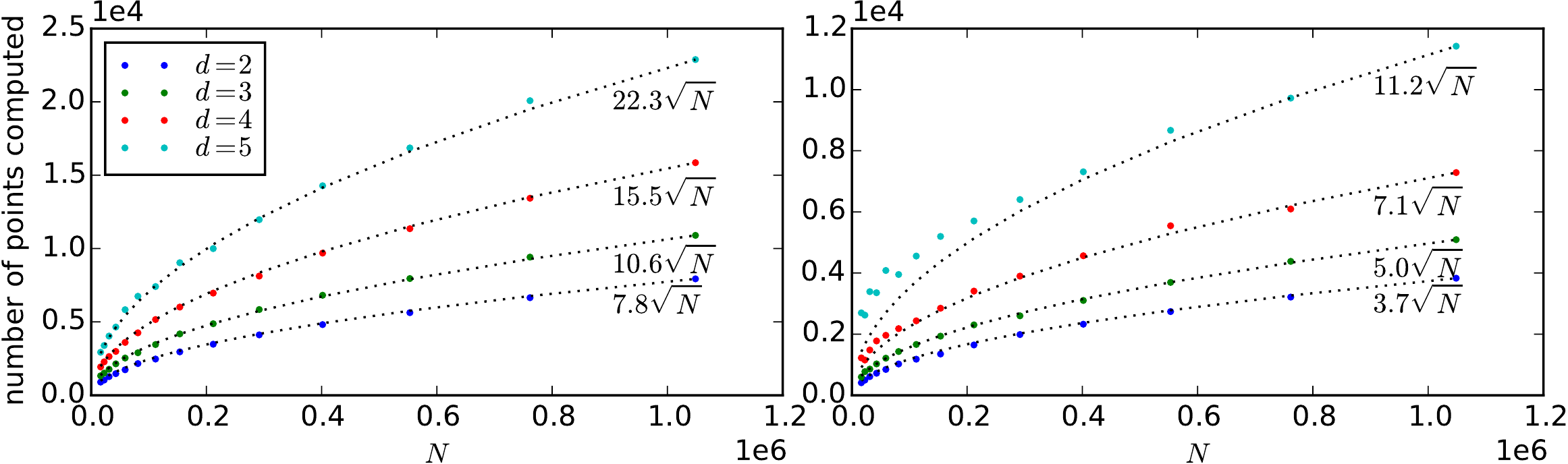}
\vskip 0.1in
\end{center}
\caption{Number of points computed on simulated data. Points are drawn from $\mathcal{B}_d(0, 1)$, for $d \in \{2,3,4,5\}$. On the left, points are drawn uniformly, while on the right, the density in $\mathcal{B}_d(0, (1/2)^{1/d})$ is $19\times$ lower that in $\mathcal{A}_d(0, (1/2)^{1/2}, 1)$, where recall that $\mathcal{A}_d(x, r_1, r_2)$ denotes an annulus centred at $x$ of inner radius $r_1$ and outer radius $r_2$. We observe a near perfect fit of the number of computed points to $\xi\sqrt{N}$ where the constant $\xi$ depends on the dimension and the distribution (left and right). The number of computed points increases with dimension. The strong convexity constant of the distribution on the right is larger, corresponding to fewer distance calculations as predicted by Theorem~\ref{thm:main}.}
\label{scalx1}
\end{figure}

The second distribution has a larger strong convexity constant $\alpha$. To see this, note that the strong convexity constant at the center of the ball depends only on the density of the ball on its surface, that is at radius 1, as can be shown using an argument based on cancelling energies of internal points. As the density at the surface under distribution 2 is approximately twice that of under distribution 1, the change in energy caused by a small shift in the medoid is twice as large under distribution 2. Thus, according to Theorem~\ref{thm:main}, we expect the number of computed points to be larger under distribution 1 than under distribution 2. This is what we observe, as shown in Figure~\ref{scalx1}, where distribution 1 is on the left and distribution 2 is on the right.

In Figure~\ref{scalx1} we observe a near perfect $N^{1/2}$ scaling of number of computed points. Dashed curves are exact $N^{1/2}$ relationships, while the coloured points are the observed number of computed points.


\section[Proof of The Main Theorem]{Proof of Theorem~\ref{thm:main} (See page~\pageref{thm:main})}
\label{sec:bigproof}
\fta*
\begin{proof}
We show that the assumptions made in Th.~\ref{thm:main} validate the assumptions required in Thm~\ref{thm:mainsub}. Firstly, if $e(i) > \rho$ then $e(i) \ge \alpha \rho^2 e(i)  > \rho $, which follows from the convexity of the loss function and. Secondly, the existance of $\beta$ follows from continuity of the gradient of the distance, combined with the existence of $\delta_1$ (non-exploding). 
\end{proof}

\begin{restatable}[Main Theorem Expanded]{theorem}{ftasub}
\label{thm:mainsub}
Let $\mathcal{S} = \{x(1), \ldots, x(N)\} \subset \mathbb{R}^d$ have medoid $x(m^{*})$ with minimum energy $E(m^*) = E^{*}$, where elements in $\mathcal{S}$ are drawn independently from probability distribution function $f_X$. Let $e(i) = \|x(i) - x(m^{*}) \|$. Suppose that for $f_X$ there exist strictly positive constants $\alpha, \beta, \rho, \delta_0$ and $\delta_1$ satisfying,   
\begin{equation}
\label{boundyness}
x \in \mathcal{B}_d(x(m^{*}), \rho) \implies \delta_0 \le f_X(x) \le \delta_1,
\end{equation}
where $\mathcal{B}_d(x, r) = \{x' \in \mathbb{R}^d \;:\; \|x' - x\| \le r\}$, and that for any set size $N$, \whp{}  all $i \in \{1, \ldots, N\}$ satisfy,
\begin{align}
E(i)  - E^{*} \ge & \begin{cases}
\label{strongcon}
\alpha e(i)^2 & \mbox{ if \;\;  } e(i) \le \rho,\\
\alpha \rho^2 & \mbox{ if \;\; }  e(i)  > \rho, 
\end{cases}
\end{align}
and,
\begin{align}
\label{final_extension}
E(i) - E^{*}  \le  \beta e(i)^2\;\; \mbox{ \;  if \;\; } e(i) \le \rho.
\end{align}
Then the expected number of elements computed, which is to say not eliminated on line 4 of \trimed{}, is $O\left(\left( V_d[1]\delta_1 + d\left(\frac{4}{\alpha}\right)^d \right) N^{\frac{1}{2}}  \right)$, where $V_d[1] = \pi^{\frac{d}{2}}/(\Gamma{(\frac{d}{2} + 1}))$ is the volume of $\mathcal{B}_d(0, 1)$.
\end{restatable}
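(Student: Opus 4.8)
The plan is to bound the number of computed points by a volumetric packing argument built on the two elimination mechanisms of Figure~\ref{triangles2}, after first confining all computed points to a bounded ball and then splitting that ball into an inner core and an outer region about the medoid. Let $i'$ be the first index in the shuffle; it is always computed and sets $E^{cl}=E(i')$. A type-1 elimination shows that any $x(j)$ with $\dist(x(i'),x(j))\ge 2E(i')$ acquires $l(j)\ge E(i')\ge E^{cl}$ on line~13 and is never computed, so every computed point lies in $\mathcal{B}_d(x(i'),2E(i'))\subseteq \mathcal{B}_d(x(m^*),R)$ with $R=O(1)$ \whp{} because $E(i')$ has bounded expectation. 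I will then fix the critical radius $r_c=N^{-1/2d}$, chosen so that the inner and outer contributions balance, and treat $e(i)\le r_c$ and $e(i)>r_c$ separately.

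For the inner core I will simply allow every point with $e(i)\le r_c$ to be computed. The expected number of such points is at most $N\cdot\delta_1\,\volume(\mathcal{B}_d(x(m^*),r_c))=\delta_1 V_d[1]\,N\,r_c^{\,d}=\delta_1 V_d[1]N^{1/2}$ by the upper density bound in~\eqref{boundyness}, and a concentration argument (together with the fact that at most $N$ points are ever computed, so the probability-$1-O(1/N)$ failure events cost only $O(1)$ in expectation) turns this into the first term $O(V_d[1]\delta_1 N^{1/2})$.

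For the outer region I will use type-2 eliminations: once $x(i)$ is computed, every point in the \emph{exclusion ball} $\mathcal{B}_d(x(i),E(i)-E^{cl})$ is eliminated. The key is that for $e(i)>r_c$, assumption~\eqref{strongcon} gives $E(i)-E^*\ge\alpha e(i)^2\ge\alpha N^{-1/d}$, so provided $E^{cl}-E^*\le\tfrac{\alpha}{2}N^{-1/d}$ the exclusion radius is at least $\tfrac{\alpha}{2}e(i)^2$. Two surviving computed points lie outside each other's exclusion balls, hence any two computed points in a dyadic shell $e(i)\in[r,2r)$ are $\tfrac{\alpha}{2}r^2$-separated; packing disjoint balls of radius $\tfrac{\alpha}{4}r^2$ into a slightly enlarged ball bounds the per-shell count by $O((1/(\alpha r))^d)$, and summing the resulting geometric series over $r=2^k r_c$ is dominated by $r=r_c$ and yields the second term $O(d(4/\alpha)^d N^{1/2})$. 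To certify $E^{cl}-E^*\le\tfrac{\alpha}{2}N^{-1/d}$ I will use that, by~\eqref{final_extension} and the lower density bound $\delta_0$ in~\eqref{boundyness}, there are \whp{} $\Omega(N^{1/2})$ ``good'' points with $E(i)\le E^*+\tfrac{\alpha}{2}N^{-1/d}$ (all points within $\sqrt{\alpha/2\beta}\,N^{-1/2d}$ of the medoid qualify); since the order is a uniform shuffle, a good point either appears or is itself eliminated (which already forces $E^{cl}$ low) within an expected $O(N^{1/2})$ initial positions, so the number of points computed while $E^{cl}$ is still large is $O(N^{1/2})$ in expectation.

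The main obstacle is precisely this last coupling: the exclusion radius at the moment $x(i)$ is computed depends on the \emph{current} value of $E^{cl}$, which is only guaranteed small once a good point has been processed, so the clean shell-packing estimate applies only to the points computed after that moment and the ``early'' computations must be charged separately to the shuffle bound above. A secondary difficulty is the bookkeeping of constants so that the two contributions collapse into the stated form $V_d[1]\delta_1+d(4/\alpha)^d$ rather than merely $O(N^{1/2})$ with dimension-dependent constants, and ensuring that all the \whp{} events (the good-point count, the inner-core count, and $R=O(1)$) are assembled so that their $O(1/N)$ failure probabilities contribute only $O(1)$ to the expectation, using that \trimed{} never computes more than $N$ points.
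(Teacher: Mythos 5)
Your proposal follows essentially the same route as the paper's proof: type-1 elimination confines all computed points to a bounded ball, the inner ball of radius $N^{-1/(2d)}$ is charged to the $\delta_1$ density bound, the outer region is handled by packing exclusion balls of radius $\Omega(\alpha e(i)^2)$ (the paper sums annuli of width $\alpha r^2$ into an integral where you use dyadic shells, an immaterial difference), and $E^{cl}\approx E^{*}$ is certified via $\delta_0$ and $\beta$ by showing a near-medoid point is processed within the first $O(N^{1/2})$ positions of the shuffle. The one slip is that \eqref{strongcon} gives $E(i)-E^{*}\ge\alpha e(i)^2$ only for $e(i)\le\rho$, the surplus beyond $\rho$ being merely $\alpha\rho^2$; the paper covers that outer region with a separate constant-count packing lemma (Lemma~\ref{lemma:outerring}), a trivial addition to your shell argument.
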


\begin{figure}[h!]
\begin{center}
\includegraphics[width=0.7\textwidth]{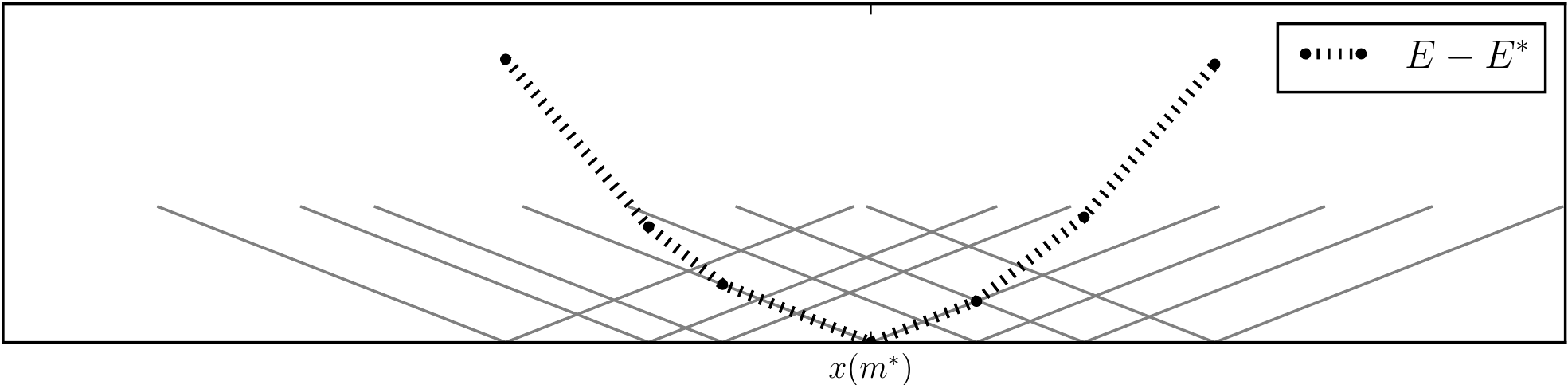}
\caption{A sum of uniformly distributed cones is approximately quadratic.}
\label{sumtris}
\end{center}
\end{figure}

\begin{figure}[h!]
\begin{center}
\includegraphics[width=0.7\textwidth]{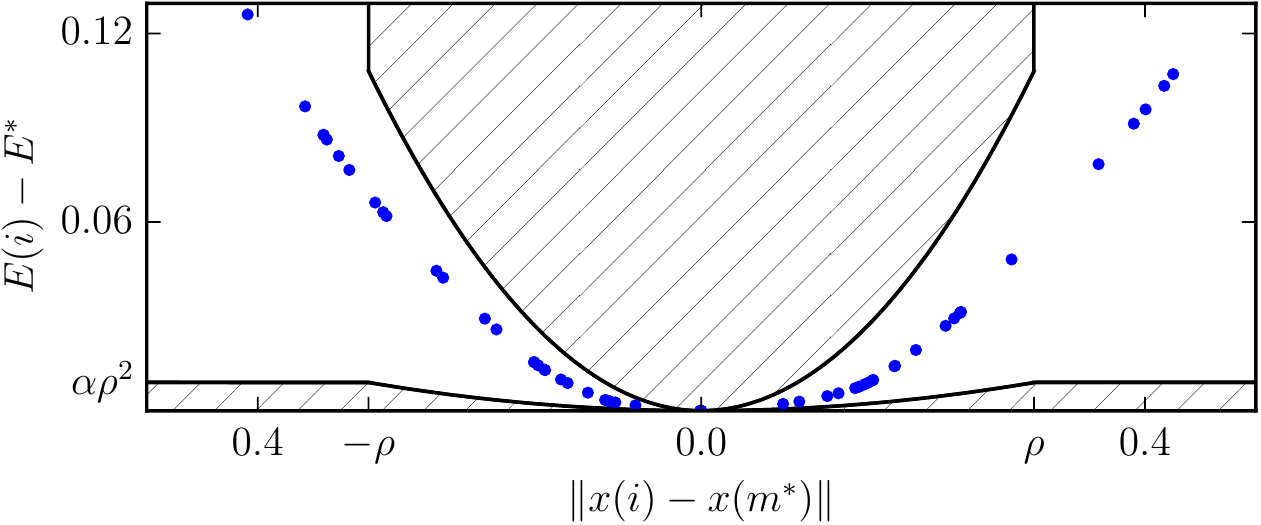}
\caption{Illustrating the parameters $\alpha$, $\beta$ and $\rho$ of Theorem~\ref{thm:main}. Here we draw $N = 101$ samples uniformly from $[-1,1]$ and compute their energies, plotted here as the series of points. Theorem~\ref{thm:main} states that their exists $\alpha$, $\beta$ and $\rho$ such that irrespective of $N$, all energies (points) will lie in the envelope (non-hatched region).}
\end{center}
\end{figure}

\begin{proof}
We first show that the expected number of computed elements in $\mathcal{B}_d(x(m^{*}), N^{-\frac{1}{2d}})$ is $O(V_d[1]\delta_1  N^{\frac{1}{2}})$. When $N$ is sufficiently large, $f_X(x) \le \delta_1$ within $\mathcal{B}_d(x(m^{*}), N^{-\frac{1}{2d}})$. The expected number of samples in $\mathcal{B}_d(x(m^{*}), N^{-\frac{1}{2d}})$ is thus upper bounded by $\delta_1$ multiplied by the volume of the ball. But the volume of a ball of radius $N^{-\frac{1}{2d}}$ in $\mathbb{R}^d$ is $V_d[1]N^{-\frac{1}{2}}$.

In Lemma~\ref{lemma:finalpacking} we use a packing argument to show that the number of computed elements in the annulus $\mathcal{A}_d(x(m^{*}), N^{-\frac{1}{2d}}, \infty)$ is $O\left( d\left(\frac{4}{\alpha}\right)^d N^{\frac{1}{2}}\right)$, but we there assume that the medoid index $m^{*}$ is the first element in $\texttt{shuffle}(\{1,\ldots, N \})$ on line 3 of \trimed{} and thus that the medoid energy is known from the first iteration $(E^{cl} = E^*)$. We now extend Lemma~\ref{lemma:finalpacking} to the case where the medoid is not the first element processed. We do this by showing that w.h.p. an element with energy very close to $E^*$ has been computed after $N^{-\frac{1}{2}}$ iterations of~\trimed{}, and thus that the bounds on numbers of computed elements obtained using the packing arguments underlying Lemma~\ref{lemma:finalpacking} are all correct to within some small factor after $N^{-\frac{1}{2}}$ iterations.

The probability of a sample lying within radius $N^{-\frac{2}{3d}}$ of $x(m^{*})$ is $\Omega(\delta_0 N^{-\frac{2}{3}})$, and so the probability that none of the first $N^{\frac{1}{2}}$ samples lies within radius $N^{-\frac{2}{3d}}$ is $O((1 - \delta_0 N^{-\frac{2}{3d}})^{N^{\frac{1}{2}}})$ which is $O(\frac{1}{N})$. Thus w.h.p. after $N^{\frac{1}{2}}$ iterations of~\trimed{}, $E^{cl}$ is within  $\beta N^{-\frac{4}{3d}}$ of $E^{*}$, which means that the radii of the balls used in the packing argument are overestimated by at most a factor $N^{-\frac{1}{3d}}$. Thus w.h.p. the upper bounds obtained with the packing argument are correct to within a factor $1 + N^{-\frac{1}{3}}$. The remaining $O(\frac{1}{N})$ cases do not affect the expectation, as we know that no more than $N$ elements can be computed. 
\end{proof}

\begin{lemma}[Packing beyond the vanishing radius]
\label{lemma:finalpacking}
If we assume~\eqref{strongcon} from Theorem~\ref{thm:main} and that the medoid index $m^{*}$ is the first element processed by \trimed{}, then the number of elements computed in $\mathcal{A}_d(x(m^{*}), N^{-\frac{1}{2d}}, \infty)$ is $O\left( d\left(\frac{4}{\alpha}\right)^d N^{\frac{1}{2}}\right)$.
\end{lemma}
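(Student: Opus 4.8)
The plan is to bound the number of computed elements in the annulus $\mathcal{A}_d(x(m^{*}), N^{-\frac{1}{2d}}, \infty)$ by a volumetric packing argument built on the \emph{exclusion balls} introduced in the proof sketch. Since we assume the medoid is processed first, we have $E^{cl} = E^{*}$ throughout, so by type-2 eliminations (Figure~\ref{triangles2}) whenever an element $x(i)$ is computed, no element inside $\mathcal{B}_d(x(i), E(i) - E^{*})$ will subsequently be computed. The first key step is to observe that any two computed elements $x(i), x(i')$ must be mutually non-excluded, i.e. $\|x(i) - x(i')\| \ge \max(E(i) - E^{*}, E(i') - E^{*})$; combined with the strong-convexity lower bound \eqref{strongcon}, this gives each computed element a ball of a guaranteed radius around it that contains no \emph{other} computed element. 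The radius of the exclusion ball for an element at distance $e(i)$ from the medoid is at least $\alpha e(i)^2$ (for $e(i) \le \rho$), and at least $\alpha \rho^2$ beyond $\rho$.

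The second step is the actual packing count, carried out shell by shell. I would partition the annulus $\mathcal{A}_d(x(m^{*}), N^{-\frac{1}{2d}}, \infty)$ into dyadic shells $\mathcal{A}_d(x(m^{*}), r_k, r_{k+1})$ with $r_k \approx 2^k N^{-\frac{1}{2d}}$. Within a shell at radius $\sim r$, every computed element has an exclusion ball of radius $\gtrsim \alpha r^2$, and since exclusion balls centred at distinct computed points are essentially disjoint (one can take half-radius balls to make this rigorous), the number of computed elements in that shell is at most the shell volume divided by the volume of a ball of radius $\sim \alpha r^2$. The shell volume scales as $r^{d}$ (more precisely $r_{k+1}^d - r_k^d = \Theta(r^d)$) while each disjoint ball has volume $\Theta((\alpha r^2)^d)$, so the count per shell is $O\!\left( r^{d}/(\alpha r^{2})^{d} \right) = O\!\left( (1/\alpha)^{d} r^{-d} \right)$. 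Summing over shells, the dominant contribution comes from the innermost shell at $r \approx N^{-\frac{1}{2d}}$, where $r^{-d} = N^{\frac{1}{2}}$, and the geometric decay of the shell contributions as $r$ grows means the sum is dominated by this first term up to a constant factor. This yields the stated $O\!\left( d (4/\alpha)^{d} N^{\frac{1}{2}} \right)$ bound, with the factor $d$ and the base $4$ emerging from the ratio of ball-volume constants $V_d[\cdot]$ across dyadic scales and the half-radius disjointness trick.

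The third step is to handle the region beyond $\rho$, where the exclusion radius is the constant $\alpha \rho^2$ rather than growing with $r$; here I would use the global containment fact from the sketch that all computed elements lie in $\mathcal{B}_d(x(i'), 2E(i'))$, so the far region has bounded volume and contributes only $O((1/\alpha)^d)$ computed elements, absorbed into the constant. I would also need the boundedness of energies to ensure the total spatial extent of computed points is finite.

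The main obstacle I anticipate is making the shell-summation constants explicit enough to land exactly on the factor $d(4/\alpha)^d$ rather than a looser bound. The geometric-series sum over dyadic shells must be controlled carefully: one needs that the ratio of successive shell counts is bounded away from $1$ (so the series converges to a constant multiple of its first term), and tracking how the dimensional volume constants $V_d$ cancel between the shell volume and the exclusion-ball volume is where the base $4$ and the linear factor $d$ must be pinned down precisely. A secondary subtlety is the disjointness claim: exclusion balls of radius $E(i)-E^{*}$ are not genuinely disjoint (an excluded \emph{uncomputed} point can lie in two of them), so the rigorous version should instead use balls of radius $\tfrac{1}{2}(E(i)-E^{*})$ centred at the \emph{computed} points, which \emph{are} pairwise disjoint by the mutual non-exclusion inequality; threading this halving through the volume ratios is exactly what produces the constant $4 = 2^2$ inside the $(4/\alpha)^d$ term.
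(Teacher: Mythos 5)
Your proposal is correct in its essentials and rests on the same core mechanism as the paper: with the medoid processed first, every computed element at radius $e(i)$ carries a type-2 exclusion ball of radius at least $\alpha e(i)^2$ (and at least $\alpha\rho^2$ beyond $\rho$), computed elements are pairwise separated accordingly, and a volumetric packing in radial shells about $x(m^{*})$ plus a constant-size count for the far region (using containment in $\mathcal{B}_d(x(m^{*}), 2E^{*})$, exactly the paper's Lemma~\ref{lemma:outerring}) gives the $N^{\frac{1}{2}}$ bound. Where you genuinely diverge is the shell decomposition. The paper's Lemma~\ref{lemma:midring} uses thin annuli $\mathcal{A}_d(x(m^{*}), r, r+\alpha r^2)$ whose width matches the exclusion radius, bounds each one with a dedicated annulus-packing lemma (Lemma~\ref{lem:annuluspacking}) in which the annulus volume is estimated as surface area times width --- this is precisely where the linear factor $d = S_d[1]/V_d[1]$ and the base $4$ come from --- and then sums the shells by comparison with $\int r^{-(1+d)}\,dr$. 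Your dyadic shells with a geometric series are simpler and give the correct $O(N^{\frac{1}{2}})$ order, but a dyadic shell has full-dimensional volume $\Theta(r^d)$ rather than $\Theta(d\,r^{d-1}\cdot\alpha r^2)$, and the half-radius balls protrude from the shell, so the per-shell count comes out as $(c/\alpha)^d r^{-d}$ with $c>4$ and no factor $d$; as you anticipated, this lands on a constant that is strictly weaker than $d(4/\alpha)^d$ for large $d$, so if the displayed dependence on $d$ and $\alpha$ is to be preserved you would need the paper's thin-shell accounting. One small correction: the ``mutual non-exclusion'' inequality $\|x(i)-x(i')\|\ge\max(E(i)-E^{*},\,E(i')-E^{*})$ is more than the algorithm gives you --- elimination only yields the one-sided bound with respect to whichever of the two elements was computed first --- but this does not hurt you, since within a shell of inner radius $r$ the one-sided bound already forces all pairwise distances between computed elements to exceed $\alpha r^2$, so balls of the \emph{common} radius $\alpha r^2/2$ are disjoint and the varying-radius half-balls (which would genuinely require the $\max$ inequality) are unnecessary.
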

\begin{proof}
Follows from Lemmas~\ref{lemma:midring} and~\ref{lemma:outerring}.
\end{proof}

\begin{lemma}[Packing from the vanishing radius $N^{-\frac{1}{d}}$ to $\rho$]
\label{lemma:midring}
If we assume~\eqref{strongcon} from Theorem~\ref{thm:main} and that the medoid index $m^{*}$ is the first element processed in \trimed{}, then the number of computed elements in $\mathcal{A}(x(m^{*}), N^{-\frac{1}{2d}},\rho)$ is $O(d\left(\frac{4}{\alpha }\right)^d N^{\frac{1}{2}})$.
\end{lemma}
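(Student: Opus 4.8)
The plan is to run a volumetric packing argument, shell by shell, exploiting the fact that once an element is computed it forbids every later-processed element from a surrounding \emph{exclusion ball}. Since by hypothesis the medoid $x(m^{*})$ is processed first, we have $E^{cl} = E^{*}$ at every iteration, so when $x(i)$ is computed its exclusion ball is exactly $\mathcal{B}_d(x(i), E(i) - E^{*})$: any element lying inside it and processed afterwards is eliminated on line 4 by a type 2 elimination. Writing $e(i) = \|x(i) - x(m^{*})\|$ and invoking \eqref{strongcon} in the regime $e(i) \le \rho$, the exclusion radius obeys $E(i) - E^{*} \ge \alpha e(i)^2$. The first observation is therefore a separation property: if $x(i)$ is processed before $x(j)$ and $x(j)$ is nonetheless computed, then $x(j)$ survived $x(i)$'s exclusion ball, so $\|x(i) - x(j)\| > E(i) - E^{*} \ge \alpha e(i)^2$.

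Next I would decompose the annulus $\mathcal{A}_d(x(m^{*}), N^{-\frac{1}{2d}}, \rho)$ into dyadic shells $\mathcal{A}_d(x(m^{*}), r_k, 2 r_k)$ with $r_k = 2^k N^{-\frac{1}{2d}}$, for $k = 0, 1, \ldots$ until $r_k$ reaches $\rho$. Within a single shell every point has $e(i) \ge r_k$, so by the separation observation every pair of computed points in that shell is separated by more than $\alpha r_k^2$; this is precisely where the inner cut-off $N^{-\frac{1}{2d}}$ earns its keep, since it keeps all exclusion radii bounded below by the strictly positive $\alpha r_0^2$ and prevents the packing from degenerating near the medoid. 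I would then place disjoint balls of radius $\tfrac{1}{2}\alpha r_k^2$ centred at the computed points of the shell; these balls are pairwise disjoint and all contained in the enlarged ball $\mathcal{B}_d(x(m^{*}), 2 r_k + \tfrac{1}{2}\alpha r_k^2)$, so a volume-ratio estimate bounds the number of computed points in the shell by $O\!\left((4/\alpha)^d r_k^{-d}\right)$.

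Finally I would sum the per-shell bounds over $k$. Because the shell count scales like $r_k^{-d}$ while the radii double, the contributions form a geometric series with ratio $2^{-d}$, dominated by the innermost shell at $r_0 = N^{-\frac{1}{2d}}$; since $r_0^{-d} = N^{\frac{1}{2}}$, the total is $O\!\left(d (4/\alpha)^d N^{\frac{1}{2}}\right)$ after absorbing the convergent sum and the volume constant $V_d[1]$ into the stated dimensional factor.

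I expect the main obstacle to be the separation step, specifically handling the one-sided nature of the exclusion relation: an element only excludes elements processed \emph{after} it, so to obtain a genuine pairwise-separation property usable for packing I must argue that, for any surviving pair, the \emph{earlier}-processed member already has exclusion radius at least $\alpha r_k^2$ inside the shell, which upgrades the directional guarantee to a symmetric one. The remaining care is bookkeeping: choosing the packing radii so that the enlarged region and the geometric sum combine to give the dimensional constant $d(4/\alpha)^d$ rather than a worse one, and confirming that \eqref{strongcon} may be applied to every computed element simultaneously, which holds \whp{} by hypothesis.
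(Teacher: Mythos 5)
Your argument is correct and rests on the same core mechanism as the paper's: with the medoid processed first, every computed $x(i)$ carries an exclusion ball of radius $E(i)-E^{*}\ge \alpha e(i)^2$, the one-sided exclusion is upgraded to a symmetric pairwise separation by invoking the bound for the earlier-processed member of each pair, and a volumetric packing argument then counts survivors, with the inner cut-off $N^{-\frac{1}{2d}}$ keeping the exclusion radii bounded away from zero. Where you differ is in the decomposition and the packing lemma. The paper slices $\mathcal{A}_d(x(m^{*}),N^{-\frac{1}{2d}},\rho)$ into annuli of variable width $\alpha r^2$ (inner radius $r_{i+1}=r_i+\alpha r_i^2$), applies the annulus-packing Lemma~\ref{lem:annuluspacking} --- which exploits the thinness of each annulus via its surface-area scaling $w(r+w)^{d-1}$ --- and then converts the sum over annuli into the integral $\int_{N^{-1/2d}}^{\infty} r^{-(1+d)}\,dr$, whose value $\frac{1}{d}N^{1/2}$ is the source of the $N^{1/2}$. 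You instead use dyadic shells $[r_k,2r_k]$ with crude ball-in-ball packing (in the spirit of Lemma~\ref{lemma:lazypack}) and a geometric series with ratio $2^{-d}$ dominated by the innermost shell; this is more elementary, avoids the sum-to-integral conversion, and makes the role of $r_0^{-d}=N^{1/2}$ transparent, at the cost of slightly looser constants: your per-shell bound is really $\left(1+\frac{4}{\alpha r_k}\right)^d$, which in the regime $\alpha r_k\le 4$ only gives $\left(\frac{8}{\alpha r_k}\right)^d$ rather than $\left(\frac{4}{\alpha r_k}\right)^d$ (the paper's own chain has the same kind of slack, ending at $(8/\sqrt{3})^d\approx 4.62^d$ before being written as $4^d$, so this is a bookkeeping quibble rather than a gap). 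Both routes land on the same order $C^d\alpha^{-d}N^{\frac{1}{2}}$, and your handling of the asymmetry of the exclusion relation --- the step you flagged as the main obstacle --- is exactly right.
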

\begin{proof}
According to Assumption~\ref{strongcon}, an element at radius $r < \rho$ has surplus energy at least $\alpha r^2$. This means that, assuming that the medoid has already been computed, an element computed at radius $r$ will be surrounded by an exclusion zone of radius $\alpha r^2$ in which no element will subsequently be computed. We will use this fact to upper bound the number of computed elements in $\mathcal{A}(x(m^{*}), N^{-\frac{1}{2d}},\rho)$, firstly by bounding the number in an annulus of inner radius $r$ and width $\alpha r^2$, that is the annulus $\mathcal{A}_d(x(m^{*}),r, r + \alpha r^2)$, and then summing over concentric rings of this form which cover $\mathcal{A}(x(m^{*}), N^{-\frac{1}{2d}},\rho)$. Recall that the number of computed elements in $\mathcal{A}_d(x(m^{*}),r, r + \alpha r^2)$ is denoted by $N_c(x(m^{*}), r, r + \alpha r^2)$. 

We use Lemma~\ref{lem:annuluspacking} to bound $N_c(x(m^{*}), r, r + \alpha r^2)$,
\begin{align*}
N_c(x(m^{*}), r, r + \alpha r^2) & \le (d + 1)^2\left(\frac{4}{\sqrt{3}}\right)^d \frac{\alpha  r^2 (r + \alpha  r^2)^{d-1}}{\left(\alpha  r^2\right)^d}\\
& \le (d + 1)^2\left(\frac{4}{\sqrt{3}}\right)^d \left( 1 + \frac{1}{\alpha r} \right)^{d-1} \\
& \le (d + 1)^2\left(\frac{4}{\sqrt{3}}\right)^d \left( \max{\left(2, \frac{2}{\alpha r}\right)} \right)^{d-1}\\
& \le (d + 1)^2\left(\frac{4}{\sqrt{3}}\right)^d \left( \max{\left(2^{d-1}, \left(\frac{2}{\alpha r}\right)^{d-1}\right)} \right)\\
& \le (d + 1)^2\left(\frac{4}{\sqrt{3}}\right)^d \left(2^{d-1} + \left(\frac{2}{\alpha r}\right)^{d-1} \right)\\
& \le (d + 1)^2\left(\frac{8}{\sqrt{3}}\right)^d  + (d + 1)^2\left(\frac{8}{\sqrt{3}}\right)^d  \left(\frac{1}{\alpha  r}\right)^{d-1}\\
\end{align*}
Let $r_0 = N^{-\frac{1}{2d}}$ and  $r_{i+1} = r_i + \alpha  r_i^2$, and let $T$ be the smallest index $i$ such that $r_i \le \rho$. With this notation in hand, we have
\begin{equation*}
N_c(x(m^{*}), N^{-\frac{1}{2d}},\rho) \le \sum_{i = 0}^{T} N_c (x(m^{*}), r_i, \alpha  r_i + r_i^2).
\end{equation*}
The summation on the right-hand side can be upper-bounded by an integral. Using that the difference between $r_i$ and $r_{i+1}$ is $\alpha r_i^2$, we need to divide terms in the sum by $\alpha r_i^2$ when converting to an integral. Doing this, we obtain,
\begin{align*}
N_c(x(m^{*}), N^{-\frac{1}{2d}},\rho) & \le \int_{N^{-\frac{1}{2d}}}^{\rho + \alpha \rho^2} N_c (x(m^{*}), r, \alpha  r^2) dr \\
& \le \mbox{const }  + (d + 1)^2\left(\frac{8}{\sqrt{3}}\right)^d \left(\frac{1}{\alpha }\right)^d \int_{N^{-\frac{1}{2d}}}^{\infty} r^{-(1+d)} dr \\
& \le \mbox{const }  + (d + 1)\left(\frac{4}{\alpha }\right)^d  N^{\frac{1}{2}}.
\end{align*}
This completes the proof, and provides the hidden constant of complexity as $(d+1)\left(\frac{4}{\alpha }\right)^d$. Thus larger values for $\alpha $ should result in fewer computed elements in the annulus $\mathcal{A}_d(x(m^{*}),r, r + \alpha  r^2)$, which makes sense given that large values of $\alpha $ imply larger surplus energies and thus larger elimination zones.
\end{proof}

\begin{lemma}[Packing beyond $\rho$]
\label{lemma:outerring}
If we assume~\eqref{strongcon} from Theorem~\ref{thm:main} and that the medoid index $m^{*}$ is the first element processed by \trimed{}, then the number of computed elements in $\mathcal{A}_d(x(m^{*}),\rho, \infty)$ is less than $( 1 + 4E^{*}/(\alpha \rho^2))^d$.
\end{lemma}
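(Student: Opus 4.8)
The plan is to combine the two elimination mechanisms from Figure~\ref{triangles2} with a single volumetric packing argument. Because the medoid $x(m^{*})$ is assumed to be the first element processed, its energy $E^{*}$ is computed on the very first iteration, so $E^{cl} = E^{*}$ holds for the remainder of the run. First I would use a type~1 elimination off this first computation: any element $x(j)$ with $\dist(x(m^{*}),x(j)) \ge 2E^{*}$ satisfies $l(j) \ge \dist(x(m^{*}),x(j)) - E^{*} \ge E^{*} = E^{cl}$ and is therefore never computed. Hence every computed element lies in $\mathcal{B}_d(x(m^{*}), 2E^{*})$, and in particular all the computed elements counted here lie in the bounded annulus $\mathcal{A}_d(x(m^{*}), \rho, 2E^{*})$.

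Next I would extract a uniform separation between computed points beyond $\rho$. When an element $x(i)$ with $e(i) > \rho$ is computed, assumption~\eqref{strongcon} (the $e(i) > \rho$ case) gives $E(i) - E^{*} \ge \alpha \rho^2$, so its type~2 exclusion ball $\mathcal{B}_d(x(i), E(i) - E^{cl}) = \mathcal{B}_d(x(i), E(i) - E^{*})$ has radius at least $\alpha \rho^2$, and no element computed later can fall inside it. Consequently, for any two computed points beyond $\rho$, ordering them by time of computation and applying this exclusion to the earlier one shows their separation exceeds $\alpha \rho^2$. The balls of radius $\alpha \rho^2 / 2$ centred at these points are therefore pairwise disjoint.

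Finally I would close with volume packing. All these centres lie within radius $2E^{*}$ of $x(m^{*})$, so the disjoint balls of radius $\alpha \rho^2 / 2$ are all contained in $\mathcal{B}_d(x(m^{*}), 2E^{*} + \alpha \rho^2 / 2)$. Comparing total volumes, the number of computed points is at most
\begin{equation*}
\left( \frac{2E^{*} + \alpha \rho^2 / 2}{\alpha \rho^2 / 2} \right)^d = \left( 1 + \frac{4 E^{*}}{\alpha \rho^2} \right)^d,
\end{equation*}
as claimed (the $V_d[1]$ factors cancel).

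The main obstacle I anticipate is bookkeeping the computation order correctly: the exclusion-ball argument only forbids \emph{subsequently} computed points, so the pairwise-separation claim must be set up by always invoking the exclusion relative to whichever of the two points was computed first, and it relies essentially on the assumption that the medoid is computed first so that $E^{cl} = E^{*}$ throughout (otherwise the exclusion radii would shrink and the separation guarantee would be lost). The volumetric and elimination steps themselves are routine once this ordering and the confinement to $\mathcal{B}_d(x(m^{*}), 2E^{*})$ are in place.
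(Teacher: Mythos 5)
Your proposal is correct and follows essentially the same route as the paper: type~1 elimination from the first (medoid) computation confines all computed elements to $\mathcal{B}_d(x(m^{*}), 2E^{*})$, the case $e(i)>\rho$ of~\eqref{strongcon} gives each computed point beyond $\rho$ an exclusion ball of radius at least $\alpha\rho^2$ and hence pairwise separation exceeding $\alpha\rho^2$, and the disjoint balls of radius $\alpha\rho^2/2$ packed into $\mathcal{B}_d(x(m^{*}), 2E^{*}+\alpha\rho^2/2)$ yield $\left(1+4E^{*}/(\alpha\rho^2)\right)^d$. The paper packages the last step as Lemma~\ref{lemma:lazypack}, but the volumetric computation and the careful ordering of the exclusion argument are identical to yours.
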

\begin{proof}
Recall that we at assuming $m^{*} = 1$, that is that the medoid is the first element processed in \trimed{}. All elements beyond radius $2 E^{*}$ are eliminated by type 1 eliminations (Figure~\ref{triangles2}), which provides the first inequality below. Then, as the excess energy is at least $\epsilon = \alpha  \rho^2 $ for all elements beyond radius $\rho$ of $x(m^{*})$, we apply Lemma~\ref{lemma:lazypack} with $\epsilon = \alpha  \rho^2/2$ to obtain the second inequality below,
\begin{align*}
N_c(m(x), \rho, \infty) & \le N_c(m(x), \rho, 2E^{*}) \\
& \le \frac{(2 E^{*} + \frac{1}{2}\alpha  \rho^2)^d}{(\frac{1}{2}\alpha  \rho^2)^d} \\
& \le \left( 1 + \frac{4E^{*}}{\alpha \rho^2}\right)^d.
\end{align*}
\end{proof}

\begin{lemma}[Annulus packing]
\label{lem:annuluspacking}
For  $0 \le r$ and $0 < \epsilon  \le w$. If
\begin{equation*}
\mathcal{X} \subset \mathcal{A}_d(0, r,r+ w),
\end{equation*}
where
\begin{equation}
\label{condition1}
\forall x \in \mathcal{X}, \mathcal{B}_d(x,\epsilon ) \cup \mathcal{X} = \{x\},
\end{equation}
then, 
\begin{equation*}
|\mathcal{X}| \le \left(d + 1\right)^2 \left( \frac{4}{\sqrt{3}} \right)^d \frac{w \left( r + w\right)^{d-1}}{\epsilon^d}.
\end{equation*}
\end{lemma}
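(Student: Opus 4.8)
The statement to prove is Lemma~\ref{lem:annuluspacking}, a packing bound: given a set $\mathcal{X}$ of points inside an annulus $\mathcal{A}_d(0, r, r+w)$ that are pairwise separated by at least $\epsilon$ (condition~\eqref{condition1}), the cardinality $|\mathcal{X}|$ is bounded by $(d+1)^2 (4/\sqrt{3})^d \, w(r+w)^{d-1}/\epsilon^d$.

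\textbf{Overall approach.} The plan is to use a standard volumetric packing argument: if the points of $\mathcal{X}$ are $\epsilon$-separated, then the open balls $\mathcal{B}_d(x, \epsilon/2)$ for $x \in \mathcal{X}$ are pairwise disjoint, so the number of points is at most the ratio of the volume of a region containing all these half-balls to the volume of a single ball of radius $\epsilon/2$. First I would note that each ball $\mathcal{B}_d(x, \epsilon/2)$ is contained in the slightly fattened annulus $\mathcal{A}_d(0, r - \epsilon/2, r + w + \epsilon/2)$, so
\begin{equation*}
|\mathcal{X}| \;\le\; \frac{\volume\big(\mathcal{A}_d(0, r-\tfrac{\epsilon}{2}, r+w+\tfrac{\epsilon}{2})\big)}{\volume\big(\mathcal{B}_d(0, \tfrac{\epsilon}{2})\big)}.
\end{equation*}
The annulus volume is $V_d[1]\big((r+w+\tfrac{\epsilon}{2})^d - (r-\tfrac{\epsilon}{2})^d\big)$ and the ball volume is $V_d[1](\epsilon/2)^d$, so the dimensional constants $V_d[1]$ cancel and I am left with bounding $\big((r+w+\tfrac{\epsilon}{2})^d - (r-\tfrac{\epsilon}{2})^d\big)/(\epsilon/2)^d$.

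\textbf{Key estimation step.} The main work is to bound the difference of $d$-th powers. Using the factorization $a^d - b^d = (a-b)\sum_{k=0}^{d-1} a^k b^{d-1-k}$ with $a = r+w+\tfrac{\epsilon}{2}$ and $b = r - \tfrac{\epsilon}{2}$, the factor $a - b = w + \epsilon$ appears, and since $\epsilon \le w$ we have $w + \epsilon \le 2w$. The sum $\sum_{k=0}^{d-1} a^k b^{d-1-k}$ has $d$ terms, each at most $a^{d-1} = (r+w+\tfrac{\epsilon}{2})^{d-1}$, and again using $\epsilon \le w$ gives $a \le r + w + \tfrac{w}{2} = r + \tfrac{3w}{2} \le \tfrac{3}{2}(r+w)$ (this last step using $r \ge 0$). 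Putting these together yields something of the form $C \cdot d \cdot (3/2)^{d-1} \cdot w (r+w)^{d-1}/(\epsilon/2)^d$, and after collecting the factors of $2$ and the $(3/2)^{d-1}$ against the target $(4/\sqrt{3})^d$, the claimed bound should follow, with the leading $d$ being absorbed into the $(d+1)^2$ factor with room to spare.

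\textbf{Main obstacle.} The argument is conceptually routine, so the real difficulty is purely the bookkeeping on the dimensional constants: I need the crude bounds above to collapse into exactly $(d+1)^2(4/\sqrt{3})^d$ rather than some other constant. The delicate point is that $(3/2)^{d-1} \cdot 2^d = 2 \cdot 3^{d-1}$, and I must verify $2 \cdot 3^{d-1} \le (4/\sqrt{3})^d$ up to the polynomial slack provided by $(d+1)^2$; since $4/\sqrt{3} \approx 2.31$ while $3^{(d-1)/d} \to 3$, the clean geometric bound alone does not suffice and the argument genuinely relies on the generous $(d+1)^2$ prefactor to swallow the discrepancy (equivalently, the author may be using a tighter packing constant such as the $(4/\sqrt{3})^d$ lattice-packing density bound rather than the naive disjoint-ball volume ratio). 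I would therefore keep the $(d+1)^2$ factor unspent until the very end and use it to close the gap, double-checking the inequality $2\cdot 3^{d-1} \le (d+1)^2 (4/\sqrt{3})^d$ holds for all $d \ge 1$, which it does since the left side grows like $3^d$ but the right side's exponential base combined with the quadratic factor dominates for small $d$ and the ratio is controlled throughout.
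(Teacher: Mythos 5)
Your overall strategy (disjoint half-balls plus a volume comparison) is the right family of argument, but the specific route you take --- containing the balls $\mathcal{B}_d(x,\epsilon/2)$ in the fattened annulus $\mathcal{A}_d(0,\,r-\epsilon/2,\,r+w+\epsilon/2)$ --- cannot produce the stated constant, and the inequality you invoke at the very end to close the gap is false. Your own bookkeeping shows that fattening costs a factor $(3/2)^{d-1}$ (from $r+w+\epsilon/2\le\tfrac{3}{2}(r+w)$, which is essentially tight when $r=0$) on top of the $2^d$ from using balls of radius $\epsilon/2$, so your constant is of order $d\,3^{d}$, namely $\tfrac{4d}{3}3^{d}$. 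Since $4/\sqrt{3}\approx 2.31<3$, the ratio of your constant to the target $(d+1)^2(4/\sqrt{3})^d$ behaves like $(3\sqrt{3}/4)^d/(d+1)^2\approx(1.30)^d/(d+1)^2\to\infty$; concretely $\tfrac{4d}{3}3^{d}>(d+1)^2(4/\sqrt{3})^d$ already for $d\ge 8$. No polynomial prefactor can absorb an exponential discrepancy in the base, so your closing claim that ``the quadratic factor dominates \ldots\ the ratio is controlled throughout'' is the genuine gap: as written you prove a valid packing lemma with constant $O(d\,3^d)$, but not the lemma as stated.

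The paper avoids this loss by never fattening the annulus. It keeps $\mathcal{A}_d(0,r,r+w)$ fixed and instead lower-bounds the volume of $\mathcal{B}_d(x,\epsilon/2)\cap\mathcal{A}_d(0,r,r+w)$ for $x$ in the annulus: by the two-ball intersection estimate of Lemma~\ref{corenlius}, this intersection retains at least a fraction $\tfrac{1}{d+1}(3/4)^{d/2}$ of the ball's volume. The disjoint intersections then all live inside the \emph{original} annulus, whose volume is at most $S_d[1]\,w(r+w)^{d-1}=d\,V_d[1]\,w(r+w)^{d-1}$ --- a single factor $(r+w)^{d-1}$ with no $(3/2)^{d-1}$. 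The only exponential losses are the $2^d$ from the half-radius and the $(2/\sqrt{3})^d$ from the intersection lemma, whose product is exactly $(4/\sqrt{3})^d$, and the remaining polynomial factors $d$ and $d+1$ are absorbed into $(d+1)^2$. To salvage your route you would need to replace the fattening step by such an intersection (boundary-effect) estimate; the naive fattened-volume ratio does not suffice.
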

\begin{proof}
The condition~\eqref{condition1} implies,
\begin{equation}
\label{reworded}
\forall x, x' \in \mathcal{X} \times \mathcal{X}, \mathcal{B}\left(x,\frac{\epsilon}{2} \right) \cup  \mathcal{B}\left(x',\frac{\epsilon}{2} \right) = \emptyset.
\end{equation}
Using that $\epsilon \in (0, w]$ and Lemma~\ref{corenlius}, one can show that for all $x \in \mathcal{A}(0, r, r+ w)$, 
\begin{equation}
\label{lowerperx}
\volume \left( \mathcal{B}\left(x,\frac{\epsilon}{2} \right)\cap \mathcal{A}(0, r, r+w) \right) > \frac{1}{d+1}\left(\frac{3}{4} \right)^{\frac{d}{2}}V_d\left[\frac{\epsilon}{2}\right]
\end{equation}
Combining~\eqref{reworded} with~\eqref{lowerperx} we have,
\begin{equation}
\label{lower101}
\volume \left( \displaystyle \bigcup_{x\in \mathcal{X}} \mathcal{B}\left(x,\frac{\epsilon}{2} \right) \cap \mathcal{A}(0, r, r+w)  \right) > \frac{V_d\left[1\right]}{d+1}\left(\frac{\sqrt{3}}{4}\right)^{d} |\mathcal{X}|  \epsilon^d.
\end{equation} 
Letting $S_d\left[\epsilon\right]$ denote the surface area of a $\mathcal{B}(0, \epsilon)$, it is easy to see that
\begin{equation}
\label{annvolume}
\volume \left( \mathcal{A}(0, r, r+w) \right) <   S_d\left[1\right]w \left(r + w\right)^{d-1}.
\end{equation}
Combining~\eqref{lower101} with~\eqref{annvolume} we get, 
\begin{equation*}
\frac{V_d\left[1\right]}{d+1}\left(\frac{\sqrt{3}}{4}\right)^d |\mathcal{X}|  \epsilon^d < S_d\left[1\right]w \left(r + w\right)^{d-1}.
\end{equation*}
which combined with the fact that 
\begin{align*}
\frac{S_d\left[1\right] }{ V_d\left[1\right]} &= \left(\frac{\frac{dV_d}{dr}}{V_d}\right)_{r=1} \\
&= d,
\end{align*}
provides us with,
\begin{equation*}
|\mathcal{X}| \le \left(d + 1\right)^2 \left( \frac{4}{\sqrt{3}} \right)^d \frac{w \left( r + w\right)^{d-1}}{\epsilon^d}.
\end{equation*}
\end{proof}

\begin{lemma}[Volume of ball intersection]
\label{corenlius}
For $x_0, x_1 \in \mathbb{R}^d$ with $\|x_0 - x_1\| = 1, $ 
\begin{equation*}
\frac{\volume \left( \mathcal{B}_d\left(x_0, 1 \right) \cap \mathcal{B}_d\left(x_1, 1 \right)  \right) }{ \volume \left( \mathcal{B}_d\left(x_0, 1 \right) \right) } \ge \frac{1}{d+1} \left(\frac{3}{4} \right)^\frac{d}{2}.
\end{equation*}
\end{lemma}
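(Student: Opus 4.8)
The plan is to reduce the statement to a one-dimensional integral via Cavalieri's principle and then bound that integral by an explicitly inscribed region. First I would normalise by setting $x_0 = 0$ and $x_1 = (1,0,\ldots,0)$, writing a generic point as $(t,z)$ with $t \in \mathbb{R}$ and $z \in \mathbb{R}^{d-1}$. The intersection $\mathcal{B}_d(x_0,1) \cap \mathcal{B}_d(x_1,1)$ is then a convex lens, symmetric about the hyperplane $\{t = 1/2\}$, whose slice at axial coordinate $t$ is a $(d-1)$-ball. A short computation gives the slice radius $\rho(t) = \sqrt{t(2-t)}$ for $t \in [0,1/2]$ and $\rho(t) = \sqrt{(1-t)(1+t)}$ for $t \in [1/2,1]$; in particular the equatorial slice at $t = 1/2$ has radius $\sqrt{3}/2$, which is the source of the $(3/4)^{d/2} = (\sqrt{3}/2)^d$ factor in the target bound.

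The key idea is that an inscribed \emph{cone} (radius linear in $t$) would only produce a $1/d$ prefactor, whereas the lens boundary is concave near each apex, so I can inscribe a fatter region with a square-root profile and gain the sharper $1/(d+1)$. Concretely, on $[0,1/2]$ one has $\rho(t)^2 = 2t - t^2 \ge \tfrac{3}{2}t$ (equivalently $t \le 1/2$), so the region $R$ whose slice at $t$ is the $(d-1)$-ball of radius $\sigma(t) = \sqrt{3/2}\,\sqrt{\min(t,1-t)}$ is contained in the lens. Integrating, $\volume(R) = 2 V_{d-1}[1] \int_0^{1/2} \sigma(t)^{d-1}\,dt$; the integral of $t^{(d-1)/2}$ produces exactly the $\tfrac{1}{d+1}$ (after the factor of two and the evaluation at $1/2$), and collecting the powers of $3/2$ and $1/2$ yields $\volume(R) = \tfrac{4}{\sqrt{3}(d+1)} V_{d-1}[1]\,(3/4)^{d/2}$.

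It then remains only to compare $V_{d-1}[1]$ with $V_d[1]$. Here the clean elementary fact is that the unit $d$-ball is contained in the cylinder $\mathcal{B}_{d-1}(0,1) \times [-1,1]$, whence $V_d[1] \le 2 V_{d-1}[1]$, i.e. $V_{d-1}[1]/V_d[1] \ge 1/2$. Dividing the displayed volume by $V_d[1]$ and using this gives a ratio at least $\tfrac{2}{\sqrt{3}(d+1)}(3/4)^{d/2} \ge \tfrac{1}{d+1}(3/4)^{d/2}$, since $2/\sqrt{3} > 1$, which is the claim (with a little room to spare). The only real obstacle is the second paragraph: one must choose the inscribed profile so that the exponent $(3/4)^{d/2}$ and the prefactor $1/(d+1)$ emerge \emph{simultaneously}, which forces the square-root (rather than linear/cone) profile together with the specific constant $\sqrt{3/2}$; the volume-ratio step is then routine.
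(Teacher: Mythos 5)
Your proof is correct and follows essentially the same route as the paper's: slice the lens perpendicular to the axis joining the centres, lower-bound the slice radius $\sqrt{t(2-t)}$ by $\sqrt{3t/2}$ on $[0,1/2]$, and integrate the square-root profile to extract the $\tfrac{1}{d+1}$ prefactor together with $(3/4)^{d/2}$. The only (minor) divergence is the final comparison of $V_{d-1}[1]$ with $V_d[1]$: the paper uses $V_{d-1}[1]/V_d[1] > 1/\sqrt{\pi}$, whereas you use the more elementary cylinder containment giving $V_{d-1}[1]/V_d[1] \ge 1/2$; both leave enough slack since $2/\sqrt{3} > 1$.
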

\begin{proof}
Let $V_d\left[r\right]$ denote the volume of $\mathcal{B}_d(0,r)$. It is easy to see that,
\begin{align*}
\volume \left( \mathcal{B}_d\left(x_0, 1 \right) \cap \mathcal{B}_d\left(x_1, 1 \right)  \right)  &= 2\int_0^{\frac{1}{2}} V_{d-1}\left[ \sqrt{x(2-x)}\right] dx \\
& \ge 2 \int_0^{\frac{1}{2}} V_{d-1} \left[ \sqrt{\frac{3}{2} x}\right] dx \\
& \ge 2 V_{d-1}\left[1\right]  \int_0^{\frac{1}{2}} \left(\frac{3}{2} x\right)^{\frac{d-1}{2}} dx \\
& \ge 2 V_{d-1}\left[1\right] \left(\frac{3}{2}\right)^{\frac{d-1}{2}} \left(\frac{2}{d+1}\right) \left(\frac{1}{2}\right)^{\frac{d+1}{2}} \\
& \ge V_{d-1}\left[1\right] \left(\frac{3}{2}\right)^{\frac{d-1}{2}} \left(\frac{2}{d+1}\right) \left(\frac{1}{2}\right)^{\frac{d-1}{2}} \\
& \ge V_{d-1}\left[1\right] \left(\frac{3}{4}\right)^{\frac{d-1}{2}} \left(\frac{2}{d+1}\right). \\
\end{align*}
Using that $\displaystyle \frac{V_{d-1}\left[1\right]}{V_{d}\left[1\right]}  > \frac{1}{\sqrt{\pi}}$  , we divide the intersection volume through by $V_{d}\left[1\right]$ to obtain, 
\begin{align*}
\frac{\volume \left( \mathcal{B}_d\left(x_0, 1 \right) \cap \mathcal{B}_d\left(x_1, 1 \right)  \right) }{ \volume \left( \mathcal{B}_d\left(x_0, 1 \right) \right) } & \ge \left(\frac{3}{4}\right)^{\frac{d-1}{2}} \left(\frac{2}{\sqrt{\pi}(d+1)}\right)\\
& \ge \frac{1}{d+1} \left(\frac{3}{4}\right)^{\frac{d}{2}}
\end{align*}
\end{proof}

\begin{lemma}[Packing balls in a ball]
\label{lemma:lazyfill}
The number of non-intersecting balls of radius $\epsilon$ which can be packed into a ball of radius $r$ in $\mathbb{R}^d$ is less than $\left(\frac{r}{\epsilon}\right)^d$
\end{lemma}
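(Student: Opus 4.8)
The plan is to prove the bound by a pure volume comparison; no geometry beyond the scaling of Lebesgue measure under dilation is needed. First I would fix the setup: suppose $\mathcal{B}_d(c_1,\epsilon),\ldots,\mathcal{B}_d(c_n,\epsilon)$ are $n$ balls of radius $\epsilon$ with pairwise disjoint interiors, each contained in a single ball $\mathcal{B}_d(c,r)$ of radius $r$ (this is how I read ``packed into''). Disjointness of interiors makes the volume of their union equal to the sum of their volumes, and containment places this union inside $\mathcal{B}_d(c,r)$.

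Next I would record the two volumes in the notation $V_d[\,\cdot\,]$ already used in Lemma~\ref{corenlius}, namely that a radius-$s$ ball has volume $V_d[s]=V_d[1]\,s^d$, which is a consequence of the $d$-homogeneity of volume under scaling. Combining disjointness with containment then gives
\[
n\,V_d[1]\,\epsilon^d \;=\; \sum_{i=1}^{n}\volume\!\left(\mathcal{B}_d(c_i,\epsilon)\right) \;\le\; \volume\!\left(\mathcal{B}_d(c,r)\right) \;=\; V_d[1]\,r^d,
\]
and dividing through by $V_d[1]\,\epsilon^d>0$ yields $n\le (r/\epsilon)^d$. This is the entire content of the bound.

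The main (indeed the only) delicate point is upgrading this to the strict inequality in the statement. For that I would argue that equality would force the small balls to fill $\mathcal{B}_d(c,r)$ up to a null set, which is impossible in dimension $d\ge 2$: balls of positive radius cannot cover the curved slivers adjacent to the spherical boundary nor the gaps between mutually tangent balls, so a set of positive measure is always left uncovered and the inequality is therefore strict. I would flag that in $d=1$ tangent intervals can tile an interval exactly, so equality is attainable and the strict claim must be read as applying for $d\ge 2$; since the applications (e.g. Lemma~\ref{lemma:outerring}) only ever use the weak form, carrying the bound as ``$\le$'' is in any case sufficient.
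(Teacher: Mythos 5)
Your proof is the same volume-comparison argument the paper gives: disjointness makes the union's volume the sum of the $n$ small volumes, containment bounds that sum by $V_d[1]r^d$, and the $d$-homogeneity of volume yields $n \le (r/\epsilon)^d$. Your additional remark on strictness (and the $d=1$ caveat) is a refinement the paper's proof silently skips, but since every downstream use (Lemmas~\ref{lemma:lazypack} and~\ref{lemma:outerring}) only needs the weak inequality, this changes nothing of substance.
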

\begin{proof}
The technique used here is a loose version of that used in proving Lemma~\ref{lem:annuluspacking}. The volume of $\mathcal{B}_d(0, \epsilon)$ is a factor $\left(r/\epsilon\right)^d$ smaller than that of $\mathcal{B}_d(0, r)$. As the balls of radius $\epsilon$ are non-overlapping, the volume of their union is simply the sum of their volumes. The result follow from the fact that the union of the balls of radius $\epsilon$ is contained within the ball of radius $r$.
\end{proof}

\begin{lemma}[Packing points in a ball]
\label{lemma:lazypack}
Given $\mathcal{X} \subset \mathcal{B}_d(0, r)$ such that no two elements of $\mathcal{X}$ lie within a distance of $\epsilon$ of each other, $|\mathcal{X}| < \left(\frac{2r + \epsilon}{\epsilon}\right)^d$.
\end{lemma}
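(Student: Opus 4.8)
The plan is to reduce this to the ball-packing bound already established in Lemma~\ref{lemma:lazyfill} by the standard trick of replacing well-separated points with disjoint half-radius balls. First I would observe that the separation hypothesis on $\mathcal{X}$ is exactly what is needed to make the balls $\mathcal{B}_d(x, \epsilon/2)$ for $x \in \mathcal{X}$ pairwise non-intersecting: if two such balls shared a point, the triangle inequality would force their centres to lie within distance $\epsilon$ of each other, contradicting the assumption. So the separation condition converts a packing-of-points statement into a packing-of-balls statement.

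Next I would locate a single ball containing all of these half-radius balls. Since every centre $x$ lies in $\mathcal{B}_d(0, r)$, any point of $\mathcal{B}_d(x, \epsilon/2)$ is within distance $r + \epsilon/2$ of the origin, so all the half-radius balls are contained in $\mathcal{B}_d(0, r + \epsilon/2)$. The slight enlargement of the radius from $r$ to $r + \epsilon/2$ is the only point requiring any care, and it is what produces the $2r + \epsilon$ in the numerator rather than $2r$.

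With these two observations in place, I would simply invoke Lemma~\ref{lemma:lazyfill} with containing radius $r + \epsilon/2$ and packed-ball radius $\epsilon/2$: the number of non-intersecting balls of radius $\epsilon/2$ that fit inside $\mathcal{B}_d(0, r + \epsilon/2)$ is less than $\left(\frac{r + \epsilon/2}{\epsilon/2}\right)^d$. Since $|\mathcal{X}|$ equals the number of these disjoint balls, we get $|\mathcal{X}| < \left(\frac{r + \epsilon/2}{\epsilon/2}\right)^d = \left(\frac{2r + \epsilon}{\epsilon}\right)^d$, which is exactly the claimed bound.

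There is essentially no hard part here; the argument is entirely routine once Lemma~\ref{lemma:lazyfill} is available. The only thing one must not overlook is the boundary enlargement: a centre sitting on the sphere of radius $r$ has its $\epsilon/2$-ball protruding outward, so the correct containing radius is $r + \epsilon/2$ and not $r$. If one sought a tighter constant one could instead pack the half-balls into $\mathcal{B}_d(0, r)$ itself by intersecting with the annulus geometry, as in Lemma~\ref{lem:annuluspacking}, but for this lemma the loose volumetric bound of Lemma~\ref{lemma:lazyfill} suffices and keeps the statement clean.
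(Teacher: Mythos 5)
Your proposal is correct and is essentially identical to the paper's own proof: both replace the $\epsilon$-separated points by disjoint balls of radius $\epsilon/2$, contain them in $\mathcal{B}_d(0, r+\epsilon/2)$, and invoke Lemma~\ref{lemma:lazyfill}.
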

\begin{proof}
As no two elements lie within distance $\epsilon$ of each other, balls of radius $\epsilon/2$ centred at elements are non-intersecting. As each of the balls of radius $\epsilon/2$ centred at elements of $\mathcal{X}$ lies entirely within $\mathcal{B}_d(0,r + \epsilon/2)$, we can apply Lemma~\eqref{lemma:lazyfill}, arriving at the result. 
\end{proof}

\section{Pseudocode for \texttt{trikmeds}}
\label{app:trikmeds}
In Alg.~\eqref{alg::triangle_k_medoids} we present \texttt{trikmeds}. It is decomposed into algorithms for initialisation~\eqref{alg::initialise}, updating medoids~\eqref{alg::update_medoids}, assigning data to clusters~\eqref{alg::assign_to_clusters} and updating bounds on the \trimed{} derived bounds~\eqref{alg::update_sum_bounds}. Table~\ref{beebop} summarised all of the variables used in \texttt{trikmeds}.

When there are no distance bounds, the location of the bottleneck in terms of distance calculations depends on $N/K^2$. If $N/K \gg K$, the bottleneck lies in updating medoids, which can be improved through the strategy used in~\trimed{}. If $N/K \ll K$, the bottleneck lies in assigning elements to clusters, which is effectively handled through the approach of~\cite{elkan_2003_kmeansicml}.

\begin{table}[ht]
\caption{Table Of Notation For \texttt{trikmeds}}
\label{beebop}
\begin{center}
\begin{tabular}{r r l }
\toprule
$N$ & : & number of training samples \\
$i$ & : & index of a sample, $i \in \{1, \ldots, N\}$\\
$x(i)$ & : & sample $i$\\
$K$ & : & number of clusters \\
$k$ & : & index of a cluster, $k \in \{1, \ldots, K\}$\\
$m(k)$ & : & index of current medoid of cluster $k$, $m(k) \in \{1, \ldots, N\}$ \\
$c(k)$ & : & current medoid of cluster $k$, that is $c(k) = x(m(k))$\\
$n_1(i)$ & : & cluster index of centroid nearest to $x(i)$\\
$a(i)$ & : & cluster to which $x(i)$ is currently assigned\\
$d(i)$ & : & distance from $x(i)$ to $c(a(i))$\\
$v(k)$ & : & number of samples assigned to cluster $k$\\
$V(k)$ & : & number of samples assigned to a cluster of index less than $k+1$\\
$l_c(i,k)$ & : & lowerbound on distance from $x(i)$ to $m(k)$\\
$l_s(i)$ & : & lowerbound on $\sum_{i' : a(i') = a(i)} \|x(i') - x(i)\| $ \\
$p(k)$ & : & distance moved (teleported) by $m(k)$ in last update\\
$s(k)$ & : & sum of distances of samples in cluster $k$ to medoid $k$ \\
\bottomrule
\end{tabular}
\end{center}
\label{tab:TableOfNotationForMyResearch}
\end{table}

\begin{algorithm}
\begin{algorithmic}
\STATE \texttt{initialise}$()$
\WHILE{not converged}
\STATE \texttt{update-medoids}$()$
\STATE \texttt{assign-to-clusters}$()$
\STATE \texttt{update-sum-bounds}$()$
\ENDWHILE
\end{algorithmic}
\caption{\texttt{trikmeds}}
\label{alg::triangle_k_medoids}
\end{algorithm}

\begin{algorithm}
\begin{algorithmic}
\ACOMM{Initialise medoid indices, uniform random sample without replacement (or otherwise)}
\STATE $\{m(1), \ldots, m(K)\} \gets \texttt{uniform-no-replacement}( \{1, \ldots, N\} )$
\FOR{$k = 1:K$}
\ACOMM{Initialise medoid and set cluster count to zero}
\STATE $c(k) \gets x(m(k))$
\STATE $v(k) \gets 0$
\ACOMM{Set sum of in-cluster distances to medoid to zero}
\STATE $s(k) \gets 0$
\ENDFOR
\FOR{$i = 1: N$}
\FOR{$k = 1: K$}
\ACOMM{Tightly initialise lower bounds on data-to-medoid distances}
\STATE $l_c(i, k) \gets \|x(i) - c(k)\|$
\ENDFOR
\ACOMM{Set assignments and distances to nearest (assigned) medoid}
\STATE $a(i) \gets \argmin_{k \in \{1, \ldots, K\}}{l_c(i,k)}$
\STATE $d(i) \gets l_c(i, a(i))$
\ACOMM{Update cluster count}
\STATE $v(a(i)) \gets v(a(i)) + 1$
\ACOMM{Update sum of distances to medoid}
\STATE $s(a(i)) \gets s(a(i)) + d(i)$
\ACOMM{Initialise lower bound on sum of in-cluster distances to $x(i)$ to zero}
\STATE $l_s(i) \gets 0$
\ENDFOR
\STATE $V(0) \gets 0$
\FOR{$k = 1:K$}
\ACOMM{Set cumulative cluster count}
\STATE $V(k) \gets V(k-1) + v(k) $
\ACOMM{Initialise lower bound on in-cluster sum of distances to be tight for medoids}
\STATE $l_s(m(k)) \gets s(k)$
\ENDFOR
\ACOMM{Make clusters contiguous}
\STATE \texttt{contiguate}$()$
\end{algorithmic}
\caption{\texttt{initialise}}
\label{alg::initialise}
\end{algorithm}

\begin{algorithm}
\begin{algorithmic}
\FOR{$k = 1 : K$}
\FOR{$i = V(k-1):V(k) - 1$}
\ACOMM{If the bound test cannot exclude $i$ as $m(k)$}
\IF{$l_s(i) < s(k)$}
\ACOMM{Make $l_s(i)$ tight by computing and cumulating all in-cluster distances to $x(i)$,}
\STATE $l_s(i) \gets 0$
\FOR{$i' = V(k-1):V(k) - 1$}
\STATE $\tilde{d}(i') \gets \|x(i) - x(i')\| $
\STATE $l_s(i) \gets l_s(i) + \tilde{d}(i') $
\ENDFOR 
\ACOMM{Re-perform the test for $i$ as candidate for $m(k)$, now with exact sums. If $i$ is the new best candidate, update some cluster information}
\IF{$l_s(i) < s(k)$}
\STATE $s(k) \gets l_s(i)$
\STATE $m(k) \gets i$
\FOR{$i' = V(k-1):V(k) - 1$}
\STATE $d(i') \gets \|x(i) - x(i')\|$
\ENDFOR
\ENDIF
\ACOMM{Use computed distances to $i$ to improve lower bounds on sums for all samples in cluster $k$  (see Figure X)}
\FOR{$i' = V(k-1):V(k) - 1$}
\STATE $l_s(i') \gets \max{(l_s(i'), |\tilde{d}(i')v(k) - l_s(i) | )}$
\ENDFOR
\ENDIF
\ENDFOR
\ACOMM{If the medoid of cluster $k$ has changed, update cluster information}
\IF{$m(k) \not= V(k-1)$}
\STATE $p(k) \gets \|c(k) - x(m(k))\| $
\STATE $c(k) \gets x(m(k))$
\ENDIF
\ENDFOR
\end{algorithmic}
\caption{\texttt{update-medoids}}
\label{alg::update_medoids}
\end{algorithm}

\begin{algorithm}
\begin{algorithmic}
\ACOMM{Reset variables monitoring cluster fluxes, } 
\FOR{$k = 1 : K$}
\ACOMM{the number of arrivals to cluster $k$,}
\STATE $\Delta_{n-in}(k) \gets 0$
\ACOMM{the number of departures from cluster $k$,}
\STATE $\Delta_{n-out}(k) \gets 0$
\ACOMM{the sum of distances to medoid $k$ of samples which leave cluster $k$}
\STATE $\Delta_{s-out}(k) \gets 0$
\ACOMM{the sum of distances to medoid $k$ of samples which arrive in cluster $k$}
\STATE $\Delta_{s-in}(k) \gets 0$
\ENDFOR
\FOR{$i = 1 : N$}
\ACOMM{Update lower bounds on distances to medoids based on distances moved by medoids}
\FOR{$k = 1 : K$}
\STATE $l(i,k) = l(i,k) - p(k)$
\ENDFOR
\ACOMM{Use the exact distance of current assignment to keep bound tight (might save future calcs)}
\STATE $l(i,a(i)) = d(i)$
\ACOMM{Record current assignment and distance}
\STATE $a_{old} = a(i)$
\STATE $d_{old} = d(i)$
\ACOMM{Determine nearest medoid, using bounds to eliminate distance calculations}
\FOR{$k = 1 : K$}
\IF{$l(i,k) < d(i)$}
\STATE $l(i,k) \gets \|x(i) - c(k)\|$
\IF{$l(i,k) < d(i)$}
\STATE $a(i) = k$
\STATE $d(i) = l(i,k)$
\ENDIF
\ENDIF
\ENDFOR
\ACOMM{If the assignment has changed, update statistics}
\IF{$a_{old} \not= a(i)$}
\STATE $v(a_{old}) = v(a_{old}) - 1$
\STATE $v(a(i)) = v(a(i)) + 1$
\STATE $l_s(i) = 0$
\STATE $\Delta_{n-in}(a(i)) = \Delta_{n-in}(a(i)) + 1$
\STATE $\Delta_{n-out}(a_{old}) = \Delta_{n-out}(a_{old}) + 1$
\STATE $\Delta_{s-in}(a(i)) = \Delta_{s-in}(a(i)) + d(i)$
\STATE $\Delta_{s-out}(a_{old}) = \Delta_{s-out}(a_{old}) + d_{old}$
\ENDIF
\ENDFOR
\ACOMM{Update cumulative cluster counts}
\FOR{$k = 1:K$}
\STATE $V(k) \gets V(k-1) + v(k) $
\ENDFOR
\STATE \texttt{contiguate}$()$
\end{algorithmic}
\caption{\texttt{assign-to-clusters}}
\label{alg::assign_to_clusters}
\end{algorithm}

\begin{algorithm}
\begin{algorithmic}
\FOR{$k = 1:K$}
\ACOMM{Obtain absolute and net fluxes of energy and count, for cluster $k$}
\STATE $\mathcal{J}^{abs}_s(k) = \Delta_{s-in}(k) + \Delta_{s-out}(k)$
\STATE $\mathcal{J}^{net}_s(k) = \Delta_{s-in}(k) - \Delta_{s-out}(k)$
\STATE $\mathcal{J}^{abs}_n(k) = \Delta_{n-in}(k) + \Delta_{n-out}(k)$
\STATE $\mathcal{J}^{net}_n(k) = \Delta_{n-in}(k) - \Delta_{n-out}(k)$
\FOR{$i = V(k-1):V(k)-1$}
\ACOMM{Update the lower bound on the sum of distances} 
\STATE $l_s(i) \gets l_s(i) - \min (\mathcal{J}^{abs}_s(k) - \mathcal{J}^{net}_n(k)d(i), \mathcal{J}^{abs}_n(k)d(i) - \mathcal{J}^{net}_s(k)) $
\ENDFOR
\ENDFOR
\end{algorithmic}
\caption{\texttt{update-sum-bounds}}
\label{alg::update_sum_bounds}
\end{algorithm}

\begin{algorithm}
\begin{algorithmic}
\STATE // This function performs an in place rearrangement over of variables $a,d,l,x$ and $m$
\STATE // The permutation applied to $a,d,l$ and $x$ has as result a sorting by cluster,
\STATE // $a(i) = k$ if $i \in \{V(k-1), V(k) \}$ for $ k \in \{1, \ldots, K \}$
\STATE // and moreover that the first element of each cluster is the medoid,
\STATE // $m(k) = V(k-1)$ for $k \in \{1, \ldots, K\}$
\end{algorithmic}
\caption{\texttt{contiguate}}
\label{alg::contiguate}
\end{algorithm}

\section{Datasets}
\label{app:datasets}
\begin{itemize}
\item \emph{Birch1}, \emph{Birch2} : Synthetic 2-D datasets available from \url{https://cs.joensuu.fi/sipu/datasets/}
\item \emph{Europe} : Border map of Europe available from \url{https://cs.joensuu.fi/sipu/datasets/}
\item \emph{U-Sensor Net} : Undirected 2-D graph data. Points drawn uniformly from unit square, with an undirected edge connecting points when the distance between them is less than $1.25\sqrt{N}$
\item \emph{D-Sensor Net} : Directed 2-D graph data. Points drawn uniformly from unit square, with directed edge connecting points when the distance between them is less than $1.45\sqrt{N}$, direction chosen at random.
\item \emph{Europe rail} : The European rail network, the shapefile is available at \url{http://www.mapcruzin.com/free-europe-arcgis-maps-shapefiles.htm}. We extracted edges from the shapefile using \texttt{networkx} available at \url{https://networkx.github.io/}.
\item \emph{Pennsylvania road} The road network of Pennsylvania, the edge list is available directly from \url{https://snap.stanford.edu/data/}
\item \emph{Gnutella} Peer-to-peer network data, available from \url{https://snap.stanford.edu/data/}
\item \emph{MNIST (0)}  The `0's in the MNIST training dataset. 
\item \emph{Conflong} The conflongdemo data is available from \url{https://cs.joensuu.fi/sipu/datasets/}
\item \emph{Colormo} The colormoments data is available at \url{http://archive.ics.uci.edu/ml/datasets/Corel+Image+Features}
\item \emph{MNIST50} The MNIST dataset, projected into 50-dimensions using a random projection matrix where each of the $784 \times 50 $ elements in the matrix is i.i.d. $\mathcal{N}(0,1)$.
\item \emph{S1, S2, S3, S4, A1, A2, A3} All of these synthetic datasets are available from \url{https://cs.joensuu.fi/sipu/datasets/}.
\item \emph{thyroid, yeast, wine, breast, spiral} All of these real world datasets are available from \url{https://cs.joensuu.fi/sipu/datasets/}.
\end{itemize}

\section{Scaling with dimension of \texttt{TOPRANK} and \texttt{TOPRANK2}}
\label{app:topscale}
Recall the assumption~\eqref{toprankass} made for the \texttt{TOPRANK} and \texttt{TOPRANK2} algorithms. The assumption states that as one approaches the minimum energy $E^{*}$ from above, the density of elements decreases. In other words, the lowest energy elements stand out from the rest and are not bunched up with very similar energies. 

Consider the case where elements are points in $\mathbb{R}^d$. Suppose that the density $f_X$ of points around the medoid is bounded by $0 < \rho_0 \le f_X \le \rho_1$, and that the energy grows quadratically in radius about the medoid.  Then, as the number of points at radius $\epsilon$ is $O(\epsilon^{d-1})$, the density (by energy) of points at radius $\epsilon$ is $O(\epsilon^{d-2})$. Thus for $d = 1$ the assumption for \texttt{TOPRANK} and \texttt{TOPRANK} does not hold, which results in poor performance for $d=1$. For $d = 2$, the assumption holds, as the density (by energy) of points is constant. For $d \ge 2$, as $d$ increases the energy distribution becomes more and more favourable for  \texttt{TOPRANK} and \texttt{TOPRANK2}, as the low ranking elements become more and more distinct with low energies becoming less probable. This explains the observation that \texttt{TOPRANK} scales well with dimension in Figure~\ref{rainbow}.

\section{Example where geometric median is a poor approximation of medoid}
There is no guarantee that the geometric median is close to the set medoid. Moreover, the element in $\mathcal{S}$ which is nearest to $g(\mathcal{S})$ is not necessarily the medoid, as illustrated in the following example. Suppose $S = \{x(1), \ldots, x(20)\} \subset \mathbb{R}^2$, with $x(i) = (0,1)$ for $i \in \{1, \ldots, 9\}$, $x(i) = (0,-1)$ for $i \in \{10, \ldots, 18\}$, $x(19) = (1/2,0)$ and $x(20) = (-1/2, 0)$. The geometric median is $(0,0)$ and the nearest points to the geometric median, $x(19)$ and $x(20)$ have energy $1 + 18\sqrt{3}/2 \approx 16.6 $.  However, points $\{x(1), \ldots, x(18)\}$ have energy $2\sqrt{3}/2 + 9 = 10.7$. Thus by choosing a point in $\mathcal{S}$ which is nearest to the geometric median, one is choosing the element with the highest energy, the opposite of the medoid. 

Note the above example appears to violate the assumptions required for $O(N^{3/2})$ convergence of \trimed{}, as it requires that the probability density function vanishes at the distribution median. Indeed, in $\mathbb{R}^d$ it is the case that if the $O(N^{3/2})$ assumptions are satisfied, the set medoid converges to the geometric median, and so the geomteric median is a good approximation.  We stress however that the geometric median is only relevant in vector spaces.

\section{Miscellaneous}
Figure~\ref{last_resort} illustrates the idea behind algorithm \trimed{}, comments in the caption.
\begin{figure}[t!]
\begin{center}
\includegraphics[width=0.8\textwidth]{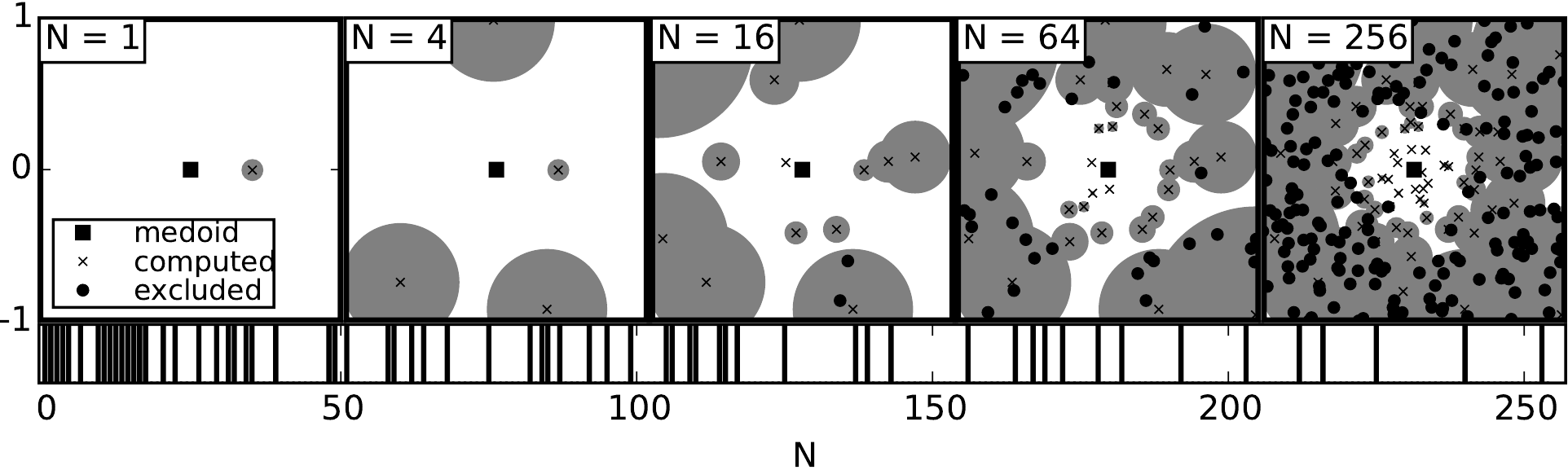}
\caption{Eliminating samples as potential medoids using only type 1 elimination, where we assume that the medoid and its energy $E^{*}$ are known, and so the radius of the exclusion ball of an element $x$ is $E(x) - E^{*}$. Uniformly sampling from $[-1,1] \times [-1,1]$, energies are computed only if the sample drawn does not lie in the exclusion zone (union of balls). If the energy at $x$ is computed, the exclusion zone is augmented by adding $\mathcal{B}_d(x, E(x) - E^{*})$. Top left to right: the distribution of samples which are computed and excluded. Bottom: the times at which samples are computed. We prove that probability of computation at time $n$ is $O(n^{-\frac{1}{2}})$. }
\label{last_resort}
\end{center}
\end{figure}

\end{document}